\documentclass[11pt]{article}

\usepackage[a4paper,top=2.5cm,bottom=2.5cm,left=2.5cm,right=2.5cm,marginparwidth=2.5cm]{geometry}
\usepackage{lineno}
\usepackage{epsfig}
\usepackage{float}
\usepackage{multirow}
\usepackage{color}
\usepackage{lineno}
\usepackage{fullpage}
\usepackage[normalem]{ulem} 
\usepackage{makeidx}
\usepackage{xspace}
\usepackage{wrapfig}
\makeindex

\usepackage{dsfont}
\usepackage{xcolor}
\usepackage{graphicx}
\usepackage{caption}
\usepackage{subcaption}
\usepackage[sort]{cite}

\usepackage{amsmath,amsthm,amssymb,mathtools}

\newtheorem{theorem}{Theorem}
\newtheorem{lemma}[theorem]{Lemma}
\newtheorem{problem}[theorem]{Problem}
\newtheorem{remark}[theorem]{Remark}
\newtheorem{corollary}[theorem]{Corollary}

\usepackage{comment}


\usepackage{amsmath,amsfonts,bm}









\def\eqref#1{equation~(\ref{#1})}
\def\Eqref#1{Equation~(\ref{#1})}








\def\1{\bm{1}}










\DeclareMathAlphabet{\mathsfit}{\encodingdefault}{\sfdefault}{m}{sl}
\SetMathAlphabet{\mathsfit}{bold}{\encodingdefault}{\sfdefault}{bx}{n}


\def\gG{{\mathcal{G}}}

\def\gO{{\mathcal{O}}}










\newcommand{\E}{\mathbb{E}}



\makeatletter
\providecommand*{\cupdot}{%
  \mathbin{%
    \mathpalette\@cupdot{}%
  }%
}
\newcommand*{\@cupdot}[2]{%
  \ooalign{%
    $\m@th#1\cup$\cr
    \hidewidth$\m@th#1\cdot$\hidewidth
  }%
}
\makeatother

\begin{document}

%

%

\title{A Simple Proof of the Universality of\\Invariant/Equivariant Graph Neural Networks}

\author{
{Takanori Maehara\footnote{RIKEN AIP, email: takanori.maehara@riken.jp}} \and {Hoang NT\footnote{RIKEN AIP, Tokyo Institute of Technology, email: hoang.nguyen.rh@riken.jp}}
}

\maketitle

\begin{abstract}
\noindent
   We present a simple proof for the universality of invariant and equivariant tensorized graph neural networks. Our approach considers a restricted intermediate hypothetical model named Graph Homomorphism Model to reach the universality conclusions including an open case for higher-order output. We find that our proposed technique not only leads to simple proofs of the universality properties but also gives a natural explanation for the tensorization of the previously studied models. Finally, we give some remarks on the connection between our model and the continuous representation of graphs.
\end{abstract}

\section{Introduction}

\subsection{Background and Motivation}

In this study, we consider a \emph{graph regression problem}.
Let $\mathcal{G}$ be the set of simple directed graphs with vertex and edge weights.
\begin{problem}
We are given pairs $(G_1, y_1), (G_2, y_2), \dots$ of input graphs $G_i \in \mathcal{G}$ and outcomes $y_i \in \mathcal{Y}$.
The task is to learn a hypothesis $h \colon \mathcal{G} \to \mathcal{Y}$ such that $h(G_i) \approx y_i$.
\end{problem}
This problem naturally arises in practice.
For example, in a toxicity detection problem~\cite{zitnik2017predicting}, we want to learn a function $h \colon \mathcal{G} \to \mathcal{Y}$ ($ = \mathbb{R}$) such that $h(G) = 1$ if $G$ has a toxicity and $h(G) = 0$ otherwise.
For another example, in a community detection problem~\cite{bruna2017community}, we want to learn a function $h \colon \mathcal{G} \to \mathcal{Y}$ ($= \mathbb{R}^{(V(G))_2}$) such that $h(G)_{uv} = 1$ if the vertices $u$ and $v$ are in the same community and $h(G)_{uv} = 0$ otherwise.
Here, we denote by $(V)_k = \{ (x_1, \dots, x_k) \in V^k : x_i \neq x_j \ (i \neq j) \}$ for a finite set $V$ and a positive integer $k$.%
\footnote{We only consider the values of equivariant functions at pairwise different indices. The general case is easily handled by considering each pattern separately, but this complicates the notation. Thus we concentrate on this case.}

We are often interested in a hypothesis that merely depends on the topology of the graph (see Section~\ref{sec:related}).
Mathematically, this condition is represented by invariance and equivariance.
A function $h \colon \mathcal{G} \to \mathcal{Y}$ ($ = \mathbb{R}$) is \emph{invariant} if for any graph $G = (V(G), E(G))$ and a permutation $\sigma$ of $V(G)$, the following equation holds:
\begin{align}
    h(G^\sigma) = h(G),
\end{align}
where $G^\sigma$ is the graph whose indices of vertices are permuted by $\sigma$, i.e., $G^\sigma = (V(G), E(G)^\sigma)$ with $E(G)^\sigma = \{ (\pi(u), \pi(v)) : (u, v) \in E(G) \}$.
A function $h \colon \mathcal{G} \to \mathbb{R}^{(V(G))_k}$ is \emph{equivariant} if for any graph $G = (V(G), E(G))$ and any permutation $\sigma$ on $V(G)$, the following equation holds:
\begin{align}
    h(G^\sigma) = h(G)^\sigma,
\end{align}
where $h(G)^\sigma$ is defined by the relation $h(G)_{x_1,\dots,x_k} = h(G)_{\pi(x_1) \dots \pi(x_k)}^\pi$ for all $\vec{x} = (x_1, \dots, x_k) \in (V(G))_k$.
The invariance and equivariance mean that the output of the function $h$ is determined up to isomorphism.


One desirable property of a hypothesis space is the \emph{universal approximation property} (\emph{universality} for short)~\cite{cybenko1989approximation,hornik1991approximation}, i.e., for any continuous function is arbitrary accurately by a function in the hypothesis space.
Maron et al.~\cite{maron2019universality} introduced a feed-forward invariant neural network and proved that this model has the universality in the continuous invariant functions.
They also characterized all the invariant linear layers~\cite{maron2018invariant}.
Keriven and Peyr\'{e}~\cite{keriven2019universal} extended the tensorized graph neural network (Figure~\ref{fig:tensorized_model}) to represent equivariant functions and proved the universality for the $k = 1$ dimensional output case.
Then, they left the universality for the higher-order output case as an open problem.

\begin{figure}
    \centering
    \begin{subfigure}[b]{0.48\textwidth}
        \centering
        \includegraphics[width=\textwidth]{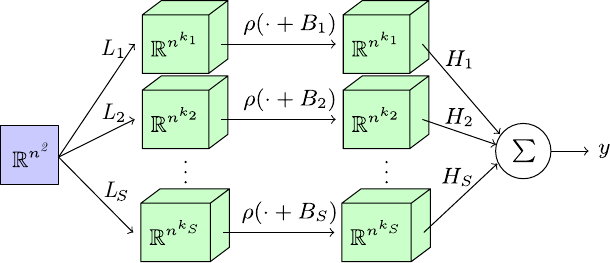}
        \caption{Tensorized Graph Neural Network \cite{keriven2019universal}}
        \label{fig:tensorized_model}
    \end{subfigure}
    \begin{subfigure}[b]{0.48\textwidth}
        \centering
        \includegraphics[width=0.78\textwidth]{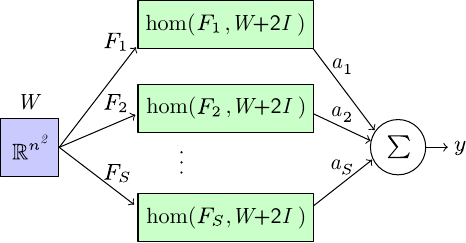}
        \caption{Graph Homomorphism Model}
        \label{fig:homo_model}
    \end{subfigure}%
    \caption{A visual comparison between Tensorized Graph Neural Network \cite{keriven2019universal} and our Graph Homomorphism  Model. In Figure~\ref{fig:tensorized_model}, we denote the equivariant linear operators by $L_i: \mathbb{R}^{n^2} \mapsto \mathbb{R}^{n^{k_i}} $ to avoid confusion with our notation of $F_i$ as graph. We also restrict the input of the Tensorized Graph Neural Network to graphs ($\mathbb{R}^{n^2}$) in place of hypergraphs.}
    \label{fig:models}
\end{figure}

\subsection{Contribution}

In this study, we give a simple proof of the universality of tensorized neural networks for both invariant and (higher-order) equivariant cases; 
the latter solves an open problem posed in \cite{keriven2019universal}.
Our proof relies on a result in graph theory (see Section~\ref{sec:compare} for a comparison of proof techniques in the existing studies).

Let $\mathcal{F}$ be the set of simple unweighted graphs, and let $F = (V(F), E(F)) \in \mathcal{F}$.
Let $G = (V(G), E(G); W)$ be a weighted graph, where $W \colon V(G) \times V(G) \to \mathbb{R}$ is the weighted adjacency matrix.
Let $\mathcal{S}(V)$ be the set of all permutations on $V$.
Then, the \emph{homomorphism number} is defined by
\begin{align}
    \mathrm{hom}(F, W) 
    = \hspace{-2.0em} \sum_{\pi \colon V(F) \to V(G)} & \prod_{i \in V(F)} W(\pi(i), \pi(i)) \nonumber \\
    \times \hspace{-1.0em} & \prod_{(i,j) \in E(F)} W(\pi(i), \pi(j)).
\end{align}
Similarly, for a given $\vec{x} \in (V(G))_k$, the \emph{$k$-labeled homomorphism number} is defined by
\begin{align}
\label{eq:labeled-homomorphism-density}
    \mathrm{hom}_{\vec{x}}(F, W) 
    = \hspace{-2.0em} \sum_{\substack{\pi \colon V(F) \to V(G) \\ \pi(i) = x_i \ (i \in [k])}} & \prod_{i \in V(F)} W(\pi(i), \pi(i)) \nonumber \\
    \times \hspace{-1.0em} & \prod_{(i,j) \in E(F)} W(\pi(i), \pi(j)).
\end{align}
By the definition, homomorphism numbers and $k$-labeled homomorphism numbers are continuous functions in $W$ that is invariant and equivariant, respectively.

Let $\mathcal{G}_0$ be the set of weighted directed graphs whose edge weights are bounded by one and the number of vertices is $n$.
Let $\mathcal{A}$ and $\mathcal{A}'$ be the set of functions of the following forms
\begin{align}
    \mathcal{A} &= \left\{ W \mapsto \hspace{-0.3em} \sum_{F \in \mathcal{F}}^{\text{finite}} a_F \mathrm{hom}(F, W + 2 I)  : a_F \in \mathbb{R} \right\}, \\
    \mathcal{A}' &= \left\{ W \mapsto \hspace{-0.3em} \sum_{F \in \mathcal{F}}^{\text{finite}} a_F \mathrm{hom}_{\vec{x}}(F, W + 2 I) : a_F \in \mathbb{R} \right\},
\end{align}
where $I$ denotes the identity matrix of $n \times n$.
We prove the following theorems.
\begin{theorem}
\label{thm:universality-invariance}
$\mathcal{A}$ is dense in the continuous invariant functions.
\end{theorem}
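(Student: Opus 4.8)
The plan is to obtain the statement from the Stone--Weierstrass theorem. First I would fix the topological setting: identifying a graph in $\mathcal{G}_0$ with its weighted adjacency matrix $W$, the space $\mathcal{G}_0$ is a product of closed bounded intervals, hence compact, and $\mathcal{S}([n])$ acts on it continuously by simultaneous permutation of rows and columns. A continuous invariant function on $\mathcal{G}_0$ is precisely a continuous function on the compact Hausdorff quotient $\mathcal{G}_0/\mathcal{S}([n])$, so it suffices to show that $\mathcal{A}$, viewed as a subset of $C(\mathcal{G}_0/\mathcal{S}([n]))$, is a point-separating subalgebra that does not vanish identically at any point; Stone--Weierstrass then gives $\overline{\mathcal{A}} = C(\mathcal{G}_0/\mathcal{S}([n]))$, i.e.\ density in the continuous invariant functions.

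Closure of $\mathcal{A}$ under the algebra operations is immediate except for products, and for those I would use that a map $V(F_1)\sqcup V(F_2)\to V(G)$ is the same thing as a pair of maps, so that
\begin{align*}
    \mathrm{hom}(F_1, M)\,\mathrm{hom}(F_2, M) = \mathrm{hom}(F_1 \sqcup F_2, M),
\end{align*}
together with the fact that $\mathcal{F}$ is closed under disjoint union; expanding a product of two members of $\mathcal{A}$ therefore again yields a finite linear combination of homomorphism numbers of simple graphs evaluated at $W+2I$. Non-vanishing is clear because $\mathrm{hom}(K_1, W+2I) = \sum_{v}\big(W(v,v)+2\big) \ge n > 0$ everywhere on $\mathcal{G}_0$ (and if the empty graph counts as a member of $\mathcal{F}$, then $\mathrm{hom}(\emptyset,\cdot)\equiv 1$ puts the constants into $\mathcal{A}$ outright). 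Hence $\mathcal{A}$ is a subalgebra of the required kind.

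The remaining, and main, point is point separation: if $G_1 = (V, W_1)$ and $G_2 = (V, W_2)$ in $\mathcal{G}_0$ are not isomorphic, then $\mathrm{hom}(F, W_1 + 2I) \neq \mathrm{hom}(F, W_2 + 2I)$ for some $F\in\mathcal{F}$. Since $W\mapsto W+2I$ commutes with permutations, this is equivalent to the assertion that a matrix $M$ with $M(v,v)\in[1,3]$ and $M(u,v)\in[-1,1]$ for $u\neq v$ is determined, up to simultaneous row/column permutation, by the collection $\{\mathrm{hom}(F,M): F\in\mathcal{F}\}$. This is a rigidity statement for homomorphism profiles of weighted graphs of the type established by Lov\'asz, and it is exactly here that the shift $W\mapsto W+2I$ does essential work: without it, every matrix with vanishing diagonal would share the trivial profile $(1,0,0,\dots)$ and separation would fail outright, whereas once the diagonal is bounded below by $1$ the (not necessarily injective) maps contributing to $\mathrm{hom}(F,M)$ already generate a rich supply of invariants --- for instance, at a diagonal matrix these numbers are precisely the power sums $\sum_v M(v,v)^p$ for all $p\neq 2$, which already recover the diagonal multiset, and from directed $k$-cycles one reads off $\Tr\big((DM)^k\big)$ with $D = \mathrm{diag}(M(1,1),\dots,M(n,n))$. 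Positivity of the diagonal moreover removes the usual twin-vertex ambiguity of such reconstructions, since a nontrivial ``weight splitting'' of a vertex would force $2\,a^p = (a')^p + (a'')^p$ for all $p$ and hence be trivial. I expect establishing this step --- locating and verifying the precise form of the homomorphism-profile rigidity result in the presence of directed edges, negative off-diagonal weights, and possible twins --- to be the genuine obstacle, while the Stone--Weierstrass packaging around it is routine.
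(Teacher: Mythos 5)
Your proposal follows essentially the same route as the paper: Stone--Weierstrass on the compact Hausdorff quotient, the identity $\mathrm{hom}(F_1,M)\,\mathrm{hom}(F_2,M)=\mathrm{hom}(F_1\sqcup F_2,M)$ for the subalgebra property, the singleton graph for non-vanishing, and a Lov\'asz-type homomorphism-profile rigidity statement for point separation, with the shift $W\mapsto W+2I$ serving exactly to make the diagonal positive. The ``genuine obstacle'' you flag at the end is precisely the result the paper invokes as a black box (Lov\'asz 2006, Lemma~2.4: two matrices with positive diagonal are isomorphic iff all their homomorphism numbers over simple unweighted graphs agree), so no new argument is needed there.
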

\begin{theorem}
\label{thm:universality-equivariance}
$\mathcal{A}'$ is dense in the continuous equivariant functions.
\end{theorem}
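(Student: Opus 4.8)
The plan is to deduce Theorem~\ref{thm:universality-equivariance} from the proof of Theorem~\ref{thm:universality-invariance} by trading equivariance for invariance on the larger category of \emph{rooted} graphs. Assume $n\ge k$ and fix the base tuple $\vec x_0=(1,2,\dots,k)\in (V(G))_k$. The first step is to note that the map $h\mapsto g$, $g(W):=h(G)_{\vec x_0}$, is a bijection between continuous equivariant functions $\mathcal G_0\to\mathbb R^{(V(G))_k}$ and continuous functions $\mathcal G_0\to\mathbb R$ that are invariant under the stabiliser $H:=\{\sigma\in\mathcal S(V(G)):\sigma(i)=i\ (i\in[k])\}\cong\mathcal S_{n-k}$; the inverse sends $g$ to $(G,\vec x)\mapsto g(G^{\sigma^{-1}})$ for any $\sigma$ with $\sigma(i)=x_i$ (well defined precisely because $g$ is $H$-invariant). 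A short computation using equivariance of $h$ and of every member of $\mathcal A'$ shows this bijection is an isometry for the supremum norm and carries $\mathcal A'$ onto
\[
  \mathcal A'_0:=\operatorname{span}\bigl\{\,W\mapsto\mathrm{hom}_{\vec x_0}(F,W+2I)\ :\ F\in\mathcal F,\ V(F)\supseteq[k]\,\bigr\}.
\]
Hence it suffices to prove that $\mathcal A'_0$ is dense in the continuous $H$-invariant functions on $\mathcal G_0$.

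For that I would run the Stone--Weierstrass theorem on the compact quotient $\mathcal G_0/H$ (compact since $\mathcal G_0$ is a bounded, closed family of weighted adjacency matrices and $H$ is finite), which comes down to three checks. First, each $W\mapsto\mathrm{hom}_{\vec x_0}(F,W+2I)$ is $H$-invariant, since it only sums over maps $\pi$ fixing the roots and relabelling the non-root vertices leaves the sum unchanged, so $\mathcal A'_0\subseteq C(\mathcal G_0/H)$. Second, $\mathcal A'_0$ is a unital subalgebra: it contains the constant $\mathrm{hom}_{\vec x_0}(([k],\emptyset),W+2I)=2^{\,k}$ (the diagonal of $W$ being zero, each root contributes a factor $2$), and the product of two rooted homomorphism numbers equals, up to a positive constant coming from the doubled root weights, the rooted homomorphism number of the graph obtained by gluing the two copies along their roots --- the degenerate cases (root--root edges) being absorbed via the standard passage between homomorphism and injective-homomorphism counts. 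Third, $\mathcal A'_0$ separates the orbits of $H$, i.e.\ the points of $\mathcal G_0/H$. Granting these, Stone--Weierstrass gives density of $\mathcal A'_0$ in $C(\mathcal G_0/H)$, which equals the continuous $H$-invariant functions, and the bijection above then completes the argument.

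The entire weight of the proof sits in the third check, which is the rooted analogue of the classical theorem (Lov\'asz) that the homomorphism numbers from all simple graphs determine a weighted graph with nonzero vertex weights up to isomorphism; applied to $W+2I$ in the unrooted case this is already what proves Theorem~\ref{thm:universality-invariance}, and the shift by $2I$ is there precisely to supply the nonzero (here constant $2$) vertex weights that this theorem requires, so that the otherwise degenerate zero diagonal does no harm. To lift it to the rooted setting I would encode a $k$-rooted weighted graph $(W,\vec x_0)$ as an ordinary weighted graph by giving the roots $k$ fresh, pairwise distinct vertex weights: a rooted homomorphism count of $F$ then becomes a fixed linear combination of ordinary homomorphism counts of $F$ with its roots decorated in the same way, so that equality of all rooted counts forces equality of all ordinary counts of the decorated graphs, and the classical theorem produces a decoration--preserving, hence root--preserving, isomorphism. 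Making this encoding faithful (so that the distinct root weights, and thus the roots themselves, can be recovered) while keeping track of the non-injective maps $\pi$ is the one genuinely technical point; everything else is routine or a direct citation.
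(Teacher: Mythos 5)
Your proposal is correct in outline and follows the same architecture as the paper's proof: both trade equivariance for invariance on a space of $k$-rooted graphs (your quotient $\mathcal{G}_0/H$ by the stabiliser of the base tuple is canonically the paper's quotient of $\mathcal{G}_0\times([n])_k$ by the full permutation action, since $\mathcal{S}([n])$ acts transitively on $([n])_k$ with stabiliser $H$), both then run the standard Stone--Weierstrass theorem, both use the root-gluing identity for the algebra structure, and both rest the entire weight on the fact that $k$-labeled homomorphism numbers into a matrix with positive diagonal determine the rooted graph up to isomorphism. The one genuine divergence is how that last fact is obtained: the paper simply invokes the $k$-labeled case of Lov\'asz's lemma directly (Theorem~\ref{thm:lovasz2006rank}), whereas you assume only the unlabeled $k=0$ case and sketch a bootstrap to the rooted setting by decorating the roots with fresh pairwise-distinct vertex weights. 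That reduction is plausible and would make the argument more self-contained, but it is precisely the step you leave unfinished, and it is unnecessary given that Lov\'asz's result is already available for general $k$. Two small inaccuracies, neither fatal: the paper does not assume $W$ has zero diagonal, so $\mathrm{hom}_{\vec{x}_0}(([k],\emptyset),W+2I)=\prod_{i\in[k]}(W(i,i)+2)$ is not the constant $2^k$ but only a function bounded below by $1$, which is all that Stone--Weierstrass requires; and the ``positive constant'' relating $\mathrm{hom}_{\vec{x}_0}(F_1,\cdot)\,\mathrm{hom}_{\vec{x}_0}(F_2,\cdot)$ to the homomorphism number of the glued graph is really the $W$-dependent factor $\prod_{i\in[k]}(W(i,i)+2)$ arising from the doubled root weights --- a wrinkle your write-up shares with the paper's own Lemma~\ref{lem:algebra-equivariant}, and which is repaired by adopting Lov\'asz's convention that labeled vertices contribute no vertex-weight factor.
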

The translated homomorphism number $W \mapsto \mathrm{hom}(F, W + 2 I)$ and the translated $k$-labeled homomorphism number $W \mapsto \mathrm{hom}_{\vec{x}}(F, W + 2 I)$ are invariant and equivariant linear function on the $|E(F)|$-fold tensor product, respectively.
Therefore, we can implement them in a tensorized neural network.
This means that our model is less powerful than the tensorized graph neural network.
On the other hand, because our models have the universality, we obtain the following.
\begin{corollary}
The invariant (resp., equivariant) tensorized graph neural network has the universality in continuous invariant (resp., equivariant) functions.
\qed
\end{corollary}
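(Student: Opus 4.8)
The plan is to sandwich the class of functions computed by tensorized graph neural networks between $\mathcal{A}$ (resp.\ $\mathcal{A}'$) and the space of all continuous invariant (resp.\ equivariant) functions, and then apply Theorem~\ref{thm:universality-invariance} (resp.\ Theorem~\ref{thm:universality-equivariance}). Write $\mathcal{T}$ for the set of functions $W \mapsto h(W)$, with graph input $W \in \mathbb{R}^{n^2}$, realizable by a tensorized network of the form in Figure~\ref{fig:tensorized_model}, and $\mathcal{T}'$ for its $k$-th order equivariant analogue. It suffices to establish: (a) $\mathcal{A} \subseteq \mathcal{T}$ and $\mathcal{A}' \subseteq \mathcal{T}'$; and (b) every $h \in \mathcal{T}$ is continuous and invariant and every $h \in \mathcal{T}'$ is continuous and equivariant. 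Granting (a) and (b), the chain $\mathcal{A} \subseteq \mathcal{T} \subseteq \{\text{continuous invariant functions}\}$ together with the density of $\mathcal{A}$ from Theorem~\ref{thm:universality-invariance} forces $\mathcal{T}$ itself to be dense, which is exactly the claimed universality; the equivariant case is identical, using $\mathcal{A}'$ and Theorem~\ref{thm:universality-equivariance}.

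Claim (b) is a routine induction on the depth of the network: its constituent operations --- the equivariant linear layers (whose full description is given in \cite{maron2018invariant}), Kronecker/tensor products of intermediate tensors, and a final invariant (resp.\ equivariant) linear read-out --- are each continuous and commute with the permutation action on the relevant index sets, so any composition is continuous and permutation-equivariant, and composing with an invariant read-out yields a continuous invariant function.

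Claim (a) carries the actual content, and it is supplied by the observation already recorded before the corollary: for each fixed $F \in \mathcal{F}$, the map $W \mapsto \mathrm{hom}(F, W + 2I)$ is a fixed equivariant polynomial in the entries of $W$ that can be computed by applying the affine (linear-with-bias) equivariant layer $W \mapsto W + 2I$, forming a suitable tensor power of the result, and contracting it according to the incidence pattern of $F$ while summing over all maps $\pi\colon V(F)\to V(G)$; this contraction-and-sum is a permutation-invariant linear map and hence an admissible read-out by the classification of \cite{maron2018invariant}. Thus each $\mathrm{hom}(F, W + 2I)$ lies in $\mathcal{T}$, and since the last layer is linear we may run finitely many such blocks in parallel and combine their outputs in the read-out, so every $\sum_F a_F\,\mathrm{hom}(F, W + 2I) \in \mathcal{A}$ lies in $\mathcal{T}$. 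Running the same construction with $\mathrm{hom}_{\vec{x}}(F,\cdot)$ --- where the final step fixes the first $k$ arguments of $\pi$ instead of summing over them, producing an $\mathbb{R}^{(V(G))_k}$-valued equivariant output --- gives $\mathcal{A}' \subseteq \mathcal{T}'$.

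The main obstacle is the bookkeeping hidden in Claim (a): one has to check with care that the particular tensor contractions defining $\mathrm{hom}(F,\cdot)$ and $\mathrm{hom}_{\vec{x}}(F,\cdot)$ --- including the self-loop (diagonal) weights, the effect of the $+2I$ shift, and the restriction to pairwise distinct indices flagged in the footnote --- are realizable using exactly the equivariant linear layers permitted in the tensorized model, and not something slightly more general. Once this dictionary between the homomorphism pattern of $F$ and an admissible sequence of linear layers and tensor products is written out, the corollary is immediate from Theorems~\ref{thm:universality-invariance} and \ref{thm:universality-equivariance}.
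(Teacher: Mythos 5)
Your proposal is correct and follows essentially the same route as the paper: the corollary rests on the observation that each translated (labeled) homomorphism number is an invariant (resp.\ equivariant) linear functional on the $|E(F)|$-fold tensor power of the input, hence realizable by the tensorized network, so $\mathcal{A}$ (resp.\ $\mathcal{A}'$) is contained in the network's hypothesis class and density transfers upward via Theorem~\ref{thm:universality-invariance} (resp.\ Theorem~\ref{thm:universality-equivariance}). You spell out the sandwich and the continuity/equivariance of the network class more explicitly than the paper, which states the containment in one sentence and treats the corollary as immediate, but the underlying argument is the same.
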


%

\subsection{Related Work}
\label{sec:related}

In practice, the design of a machine learning model (e.g. neural networks) usually follows some prior knowledge about the target functions since restriction bias helps to simplify the learning process. For instance, in image processing, convolutional neural networks \cite{lecun1989backpropagation} are designed to be translation invariant \cite{krizhevsky2012imagenet} or shift-invariant \cite{zhang2019making}. Therefore, much research has been conducted to address the universality of general invariant neural networks. More recently, graph neural networks predicting labels of vertices \cite{chebynets,gcn,dgi} have hinted the importance of equivariant models. A natural question in learning theory is whether these aforementioned models are \emph{universal}? \cite{hornik1991approximation,maron2018invariant}. Here, we discuss related work that answered this question.

\paragraph{Invariant models} The invariant property of a model is usually discussed in the context of learning from points clouds and sets~\cite{qi2017cvpr,zaheer2017deep,sannai2019universal}, then generalized to symmetries~\cite{maron2018invariant} and group actions~\cite{cohen2016group}. While universality analyses for models on sets are well developed, the analysis for graphs is limited~\cite{maron2018invariant,maron2019universality,kondor2018generalization}. Recently, Maron et al.~\cite{maron2019universality} proved a neural network which is $G$-invariant is universal. Similarly, Keriven and Peyr\'{e}~\cite{keriven2019universal} obtained the universal result on tensorized graph neural networks by using a more direct application of the Stone-Weierstrass theorem.

\paragraph{Equivariant models} The existence of equivariant models only makes practical sense in some specific cases, for example, learning on graphs' vertices. Therefore, comparing to the invariant case, there are only a limited number of work addressing equivariance \cite{ravanbakhsh2017equivariance,keriven2019universal,sannai2019universal}. Consequently, the universality of equivariant graph models have only has recently proven by Keriven and Peyr\'{e}~\cite{keriven2019universal}. 


\section{Proofs}

To prove the universality of a class of functions, we use the Stone--Weierstrass theorem:
\begin{theorem}[{Stone--Weierstrass Theorem~\cite[Theorem 1.1]{nel1968theorems}}]
\label{thm:stone-weierstrass}
Let $\mathcal{X}$ be a compact Hausdorff space and $C(\mathcal{X}, \mathbb{R})$ be the set of continuous functions from $\mathcal{X}$ to $\mathbb{R}$, equipped with the $\infty$-norm.
If a subalgebra $\mathcal{A} \subseteq C(\mathcal{X}, \mathbb{R})$
satisfies the following two conditions: 
\begin{itemize}
    \item $\mathcal{A}$ separates points, i.e., for any $x \neq y$, there exists $h \in \mathcal{A}$ such that $h(x) \neq h(y)$, and
    \item There exists $u \in \mathcal{A}$ that is bounded away from zero, i.e., $\inf_{x \in \mathcal{X}} |u(x)| > 0$,
\end{itemize}
then $\mathcal{A}$ is dense in $C(\mathcal{X}, \mathbb{R})$.
\qed
\end{theorem}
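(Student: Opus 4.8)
The plan is to prove this non-unital form of the Stone--Weierstrass theorem by the classical closure/lattice route, taking care throughout that the subalgebra $\mathcal{A}$ is \emph{not} assumed to contain the constant functions; the two hypotheses (separation of points, and the existence of $u$ bounded away from zero) will compensate exactly for their absence. First I would pass to the uniform closure $\mathcal{B} = \overline{\mathcal{A}}$, which is again a subalgebra of $C(\mathcal{X}, \mathbb{R})$ and is closed. Since $\mathcal{A} \subseteq \mathcal{B}$, proving $\mathcal{B} = C(\mathcal{X}, \mathbb{R})$ yields that $\mathcal{A}$ is dense, so it suffices to approximate an arbitrary $\phi \in C(\mathcal{X}, \mathbb{R})$ uniformly by elements of $\mathcal{B}$.

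The first key step is the \emph{lattice lemma}: $\mathcal{B}$ is closed under $f \mapsto |f|$, hence under $\max$ and $\min$ via $\max(f,g) = \tfrac12(f + g + |f - g|)$ and $\min(f,g) = \tfrac12(f + g - |f - g|)$. For $f \in \mathcal{B}$ with $\|f\|_\infty = M$, I would approximate $|t|$ uniformly on $[-M, M]$ by polynomials $p_n$ \emph{with zero constant term}, obtained from any Weierstrass approximation $q_n \to |t|$ by setting $p_n = q_n - q_n(0)$ and using $q_n(0) \to |0| = 0$. Because $p_n$ has no constant term, $p_n(f)$ is a linear combination of $f, f^2, \dots$ and therefore lies in the algebra $\mathcal{B}$; since $p_n(f) \to |f|$ uniformly and $\mathcal{B}$ is closed, $|f| \in \mathcal{B}$. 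This zero-constant-term device is precisely what lets the lemma go through without assuming $1 \in \mathcal{A}$.

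The second key step, which I expect to be the main obstacle, is \emph{two-point interpolation}: for any $x \neq y$ and any targets $a, b \in \mathbb{R}$, there is $g \in \mathcal{A}$ with $g(x) = a$ and $g(y) = b$. With constants this is a trivial affine adjustment of a separating function, but here I would argue structurally. The image of the evaluation map $\mathcal{A} \to \mathbb{R}^2$, $g \mapsto (g(x), g(y))$, is a linear subspace closed under coordinatewise product, i.e. a subalgebra of $\mathbb{R}^2$. A line $\mathbb{R}\cdot(p,q)$ is product-closed only if $(p^2, q^2) \in \mathbb{R}\cdot(p,q)$, i.e. $pq(p - q) = 0$, forcing $p = 0$, $q = 0$, or $p = q$; hence the only subalgebras are $\{0\}$, the two coordinate axes, the diagonal $\{(t,t)\}$, and all of $\mathbb{R}^2$. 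Separation yields some $g$ with $g(x) \neq g(y)$, excluding the diagonal and $\{0\}$, while $u$ bounded away from zero satisfies $u(x) \neq 0 \neq u(y)$, excluding both axes. Thus the image is all of $\mathbb{R}^2$, which delivers the required $g$.

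Finally I would run the standard gluing argument. Fix $\phi$ and $\epsilon > 0$, and for each pair choose $h_{x,y} \in \mathcal{A} \subseteq \mathcal{B}$ agreeing with $\phi$ at $x$ and $y$. For fixed $x$, the open sets $\{z : h_{x,y}(z) < \phi(z) + \epsilon\}$ cover $\mathcal{X}$; by compactness finitely many $y_1, \dots, y_m$ suffice, and $h_x := \min_i h_{x,y_i} \in \mathcal{B}$ satisfies $h_x < \phi + \epsilon$ everywhere with $h_x(x) = \phi(x)$. The open sets $\{z : h_x(z) > \phi(z) - \epsilon\}$ then cover $\mathcal{X}$, and a second finite subcover gives $g := \max_j h_{x_j} \in \mathcal{B}$ with $\phi - \epsilon < g < \phi + \epsilon$, so $\|g - \phi\|_\infty \le \epsilon$. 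As $\epsilon$ was arbitrary and $\mathcal{B}$ is closed, $\phi \in \mathcal{B}$, completing the proof. The compactness and Hausdorff hypotheses on $\mathcal{X}$ enter precisely here, guaranteeing that the defining sets are open and that finite subcovers exist.
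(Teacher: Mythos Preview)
The paper does not give its own proof of this theorem: it is quoted verbatim from \cite{nel1968theorems} and closed with a \qed, serving only as the tool used later to establish Theorems~\ref{thm:universality-invariance} and~\ref{thm:universality-equivariance}. So there is no proof in the paper to compare against.

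Your argument, taken on its own, is correct and is the standard closure/lattice proof adapted to the non-unital setting. The three ingredients all check out: (i) the zero-constant-term trick $p_n = q_n - q_n(0)$ is exactly what is needed so that $p_n(f)$ stays in the algebra without assuming $1 \in \mathcal{A}$; (ii) your classification of subalgebras of $\mathbb{R}^2$ is accurate, and the two hypotheses eliminate precisely the diagonal and the two axes, forcing the evaluation image to be all of $\mathbb{R}^2$; (iii) the double finite-subcover construction of $g = \max_j \min_i h_{x_j,y_i}$ is the usual one and the inequalities go through. One small cosmetic point: the Hausdorff hypothesis is not really used in the gluing step you describe (openness of the sets and compactness suffice there); in this formulation Hausdorff just makes ``$x \neq y$'' meaningful and guarantees that $C(\mathcal{X},\mathbb{R})$ separates points at all, so it is implicitly behind the two-point interpolation step rather than the covering step.
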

The proofs are devoted to verify the conditions of the Stone--Weierstrass theorem.

\subsection{Proof of Theorem~\ref{thm:universality-invariance} (Invariant Case)}

We first define the graph space.
Let $n$ be the number of vertices in input graphs.
Let 
\begin{align}
    \mathcal{G}_0 = \left\{ \right. & W \in \mathbb{R}^{n \times n} : \left. |W(i, j)| \le 1, \ \forall i, j \right\}
\end{align}
be the set of the weighted adjacency matrices.
We denote by $[n] = \{1, \dots, n\}$.
The $l_1$ norm\footnote{Because $\mathcal{G}_0$ is a finite-dimensional vector space, any norms are equivalent; thus, the result in this section is invariant with respect to the choice of the norm.} of the graphs is given by
\begin{align}
    \| W \|_1 = \sum_{u, v \in [n]} |W(u, v)|.
\end{align}
Then, we introduce the \emph{edit distance} $\delta_1$ by 
\begin{align}
    \delta_1(W_1, W_2) = \min_{\sigma \in \mathcal{S}([n])} \| W_1 - W_2^\sigma \|_1,
\end{align}
where $\mathcal{S}([n])$ is the set of all permutations on $[n]$.
The edit distance $\delta_1$ is nonnegative and satisfies the triangle inequality, i.e., it is a pseudo-metric.
We define the \emph{graph space} by the metric identification as $\tilde{\mathcal{G}}_0 = \mathcal{G}_0 / {\sim}$ where $W_1 \sim W_2$ if and only if $\delta_1(W_1, W_2) = 0$.
This forms a metric space.

Any invariant function $f \colon \mathcal{G}_0 \to \mathbb{R}$ is identified as a function $f \colon \tilde{\mathcal{G}}_0 \to \mathbb{R}$.
Our goal is to prove the universality for the set $C(\tilde{\mathcal{G}}_0, \mathbb{R})$ of continuous functions from $\tilde{\mathcal{G}}_0$ to $\mathbb{R}$.
Now, we check the conditions of the Stone--Weierstrass theorem.

First, we check the condition of the space.
\begin{lemma}
\label{lem:compactness-of-graph-space}
The graph space $(\tilde{\mathcal{G}}_0, \delta_1)$ is a compact Hausdorff space.
\end{lemma}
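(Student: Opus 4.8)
The plan is to get both properties essentially for free from the fact that $\mathcal{G}_0$ is compact in $\mathbb{R}^{n \times n}$, together with the single observation that passing from $\|\cdot\|_1$ to $\delta_1$ only contracts distances. The Hausdorff part needs no real argument: by construction $\tilde{\mathcal{G}}_0 = \mathcal{G}_0/{\sim}$ carries the genuine metric $\delta_1$ (the metric identification collapses exactly the pairs at $\delta_1$-distance zero, and the excerpt already records that $\delta_1$ is a pseudo-metric), and every metric space is Hausdorff. So the substance is compactness.

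For compactness, first note that $\mathcal{G}_0 = \{ W : |W(i,j)| \le 1 \ \forall i,j \}$ is a closed, bounded subset of the finite-dimensional space $\mathbb{R}^{n \times n}$, hence compact. Next consider the quotient map $q \colon \mathcal{G}_0 \to \tilde{\mathcal{G}}_0$, which is surjective. Taking $\sigma$ to be the identity permutation in the definition of $\delta_1$ yields
\begin{align}
    \delta_1(W_1, W_2) = \min_{\sigma \in \mathcal{S}([n])} \| W_1 - W_2^\sigma \|_1 \le \| W_1 - W_2 \|_1,
\end{align}
so $q$ is $1$-Lipschitz from $(\mathcal{G}_0, \|\cdot\|_1)$ to $(\tilde{\mathcal{G}}_0, \delta_1)$, in particular continuous; since a continuous image of a compact set is compact, $\tilde{\mathcal{G}}_0 = q(\mathcal{G}_0)$ is compact. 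If a sequential argument reads more cleanly, one can instead take $([W_k])_k$ in $\tilde{\mathcal{G}}_0$, lift to representatives $W_k \in \mathcal{G}_0$, extract by Bolzano--Weierstrass a subsequence with $W_{k_j} \to W$ in $\|\cdot\|_1$, and conclude $\delta_1(W_{k_j}, W) \le \|W_{k_j} - W\|_1 \to 0$, so that $[W_{k_j}] \to [W]$; a sequentially compact metric space is compact.

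I do not expect a genuine obstacle here. The only points deserving a line of care are that $\delta_1$ is well defined and is a pseudo-metric — the minimum is over the finite group $\mathcal{S}([n])$ so it is attained; symmetry uses that $\sigma \mapsto \sigma^{-1}$ is a bijection of $\mathcal{S}([n])$ with $\| W_1^{\sigma^{-1}} - W_2 \|_1 = \| W_1 - W_2^\sigma \|_1$; and the triangle inequality follows by composing optimal permutations — but the excerpt already grants all of this. Thus the lemma reduces to recording the contraction inequality above and invoking compactness of the cube $[-1,1]^{n \times n}$.
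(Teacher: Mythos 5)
Your proof is correct and follows essentially the same route as the paper: Hausdorff from the metric identification, and compactness inherited from the cube $[-1,1]^{n\times n}$ via the contraction $\delta_1(W_1,W_2)\le\|W_1-W_2\|_1$ (the paper phrases this as extracting a convergent subsequence in $\mathcal{G}_0$ and noting it still converges in $\tilde{\mathcal{G}}_0$, which is exactly your sequential variant). Your quotient-map packaging is slightly cleaner but not a different argument.
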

\begin{proof}
It is Hausdorff because it is a metric space.
We show the sequential compactness.
Let $W_1, W_2, \dots$ be an arbitrary sequence in $\tilde{\mathcal{G}}_0$, which is also identified as a sequence in $\tilde{\mathcal{G}}_0$.
Because $\mathcal{G}_0$ is compact in the $l_1$ norm, we can choose a convergent subsequence. 
Such sequence is also a convergent subsequence in $\tilde{\mathcal{G}}_0$.
Thus, $\tilde{\mathcal{G}}_0$ is compact.
\end{proof}

Next, we check the conditions of $\mathcal{A}$.
\begin{lemma}
\label{lem:algebra-invariant}
$\mathcal{A}$ forms an algebra.
\end{lemma}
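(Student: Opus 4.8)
The goal is to show that $\mathcal{A}$ is closed under addition, scalar multiplication, and pointwise products. Closure under linear combinations is immediate from the definition, since a linear combination of finite sums of the form $\sum_F a_F \mathrm{hom}(F, W+2I)$ is again such a finite sum. The only substantive point is closure under products, so the plan is to prove the key identity
\begin{align}
\label{eq:hom-product}
  \mathrm{hom}(F_1, W) \cdot \mathrm{hom}(F_2, W) = \mathrm{hom}(F_1 \sqcup F_2, W),
\end{align}
where $F_1 \sqcup F_2$ denotes the disjoint union of the two graphs. This is a standard fact in the theory of graph homomorphisms: a homomorphism from a disjoint union is precisely a pair of independent homomorphisms from each component, and the loop-weight/edge-weight product in the definition factors accordingly over the two vertex sets and edge sets. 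I would prove it by simply expanding both sides: the right side sums over maps $\pi \colon V(F_1) \sqcup V(F_2) \to V(G)$, which biject with pairs $(\pi_1, \pi_2)$, and since $V(F_1), V(F_2)$ are disjoint with no edges between them, $\prod_{i} W(\pi(i),\pi(i)) \prod_{(i,j)\in E} W(\pi(i),\pi(j))$ splits as the product of the corresponding quantities for $\pi_1$ and $\pi_2$; distributing the double sum gives the product on the left.

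Granting \eqref{eq:hom-product}, closure under products follows by bilinearity: if $f = \sum_{F_1} a_{F_1} \mathrm{hom}(F_1, W+2I)$ and $g = \sum_{F_2} b_{F_2} \mathrm{hom}(F_2, W+2I)$ are two elements of $\mathcal{A}$, then
\begin{align}
  f(W) g(W) = \sum_{F_1, F_2} a_{F_1} b_{F_2}\, \mathrm{hom}(F_1 \sqcup F_2, W+2I),
\end{align}
which, after collecting terms with the same disjoint-union graph, is again a finite linear combination of translated homomorphism numbers, hence lies in $\mathcal{A}$. One should note that $\mathcal{A}$ contains the constant function $1$, realized by taking $F$ to be the empty graph (empty vertex set), for which $\mathrm{hom}(F, W+2I) = 1$ identically; alternatively this can be deferred to the "bounded away from zero" step, but it is convenient to record it here so that $\mathcal{A}$ is a unital algebra.

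I do not expect any real obstacle here — the lemma is essentially bookkeeping. The one mildly delicate point is being careful about the definition of disjoint union when $F_1$ and $F_2$ are unweighted simple directed graphs (the setting of $\mathcal{F}$): one must take a genuinely disjoint copy of the vertex sets so that no spurious edges or identifications are introduced, and check that $F_1 \sqcup F_2 \in \mathcal{F}$, which is clear. I would also remark in passing that the translation $W \mapsto W + 2I$ plays no role in this lemma — the algebra property holds for any fixed affine reparametrization of $W$, since $\mathrm{hom}(\cdot, \cdot)$ is being evaluated at the same argument on both sides — so the $+2I$ can be carried along inertly throughout.
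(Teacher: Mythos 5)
Your proposal is correct and follows the paper's own argument exactly: closure under linear combinations is immediate, and closure under products is reduced to the identity $\mathrm{hom}(F_1, W)\,\mathrm{hom}(F_2, W) = \mathrm{hom}(F_1 \cupdot F_2, W)$ for the disjoint union, which the paper states without proof and you verify by the standard factorization of maps out of a disjoint union. The extra remarks (unitality via the empty graph, the inertness of the $+2I$ translation) are harmless and consistent with the paper.
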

\begin{proof}
Clearly, it is closed under the addition and the scalar multiplication.
It is closed under the product because of the following identity:
\begin{align}
\label{eq:algebra-invariant}
    \mathrm{hom}(F_1, W) \cdot \mathrm{hom}(F_2, W) = \mathrm{hom}(F_1 \cupdot F_2, W),
\end{align}
where $F_1 \cupdot F_2$ is the disjoint union of $F_1$ and $F_2$.
\end{proof}
\begin{lemma}
\label{lem:bounded-away-from-zero-invariant}
$\mathcal{A}$ contains an element that is bounded away from zero.
\end{lemma}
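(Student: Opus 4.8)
The goal is to exhibit a single element of $\mathcal{A}$ whose absolute value is bounded below by a positive constant over all of $\mathcal{G}_0$. The natural candidate is the simplest homomorphism number: take $F$ to be the empty graph on one vertex (a single loopless, edgeless vertex), so that $\mathrm{hom}(F, W+2I) = \sum_{i \in [n]} (W+2I)(i,i) = \sum_{i\in[n]}(W(i,i)+2)$. Since every entry of $W$ satisfies $|W(i,j)| \le 1$, each diagonal term $W(i,i)+2$ lies in $[1,3]$, hence the sum lies in $[n, 3n]$ and in particular is bounded away from zero by $n > 0$. This is exactly why the paper translates $W$ by $2I$: it shifts the diagonal into the strictly positive range so that the one-vertex homomorphism number cannot vanish.

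\begin{proof}
Let $F_0$ be the graph with a single vertex and no edges, so $F_0 \in \mathcal{F}$. Then for any $W \in \mathcal{G}_0$,
\begin{align}
    \mathrm{hom}(F_0, W + 2I) = \sum_{i \in [n]} \big( W(i,i) + 2 \big).
\end{align}
Taking $a_{F_0} = 1$ and all other coefficients zero gives $u(W) = \mathrm{hom}(F_0, W+2I) \in \mathcal{A}$. Since $|W(i,i)| \le 1$ for all $i$, each summand satisfies $W(i,i) + 2 \ge 1$, whence $u(W) \ge n$ for every $W \in \mathcal{G}_0$. Therefore $\inf_{W \in \mathcal{G}_0} |u(W)| \ge n > 0$, and $u$ is bounded away from zero.
\end{proof}

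The argument is essentially immediate once the candidate is identified; the only subtlety is conceptual rather than technical, namely recognizing that the role of the $+2I$ translation is precisely to make this constant-producing element available. No compactness or separation input is needed here, and the one-vertex graph $F_0$ is well-defined in $\mathcal{F}$ since the homomorphism-number formula in the empty-edge case reduces to a product over vertices only. I would also note in passing that the same $u$ will serve, verbatim, as the bounded-away-from-zero element in the equivariant setting of Theorem~\ref{thm:universality-equivariance}, since $\mathrm{hom}_{\vec{x}}(F_0', W+2I)$ with $F_0'$ a single \emph{labeled} vertex again yields a value in $[1,3]$.
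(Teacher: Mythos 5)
Your proof is correct and matches the paper's own argument exactly: the paper also takes the singleton graph $\circ$ and observes that $W \mapsto \mathrm{hom}(\circ, W + 2I) \ge n$ is bounded away from zero. Your version just spells out the computation $\sum_{i\in[n]}(W(i,i)+2) \ge n$ that the paper leaves implicit.
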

\begin{proof}
Let $\circ$ be the singleton graph.
Then, $W \mapsto \mathrm{hom}(\circ, W + 2I) \ge n$ is bounded away from zero.
\end{proof}

To prove the separate point property, we use the following theorem.
\begin{theorem}[{\cite[Lemma~2.4]{lovasz2006rank}, $k = 0$ case, in our terminology}]
\label{thm:lovasz2006rank-invariant}
Let $W_1, W_2 \in \mathbb{R}^{n \times n}$ be matrices with positive diagonal elements.
Then, $W_1$ and $W_2$ are isomorphic if and only if $\mathrm{hom}(F, W_1) = \mathrm{hom}(F, W_2)$ for all simple unweighted graph $F$.
\qed
\end{theorem}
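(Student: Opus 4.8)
The plan is to prove the content of the cited lemma by the classical three-step scheme \textit{counting homomorphisms} $\Rightarrow$ \textit{counting injective homomorphisms} $\Rightarrow$ \textit{isomorphism}, with the modifications forced by the fact that the target $W$ is an arbitrary real matrix rather than a $0/1$ matrix. The easy direction is immediate: if $W_2 = W_1^\sigma$ for a permutation $\sigma$, then composing each map $\pi\colon V(F)\to[n]$ with $\sigma$ is a bijection between the terms defining $\mathrm{hom}(F,W_1)$ and $\mathrm{hom}(F,W_2)$, so these numbers agree for every $F$.

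For the converse, write $\mathrm{inj}(F,W)$ for the analogue of $\mathrm{hom}(F,W)$ in which $\pi$ is required to be injective. Grouping the maps counted by $\mathrm{hom}(F,W)$ according to the partition $P$ of $V(F)$ into their fibres, and letting $F/P$ be the quotient obtained by identifying the vertices in each block of $P$, one obtains
\begin{align}
  \mathrm{hom}(F,W) \;=\; \sum_{P}\ \mathrm{inj}(F/P,\,W),
\end{align}
where $F/P$ is in general a graph \emph{with loops and multiple edges}: an identified vertex $v$ carries a loop for every edge inside its block, read as an extra factor $W(\pi(v),\pi(v))$, and parallel edges are read as powers of the corresponding entry of $W$. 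By M\"obius inversion over the partition lattice, this recovers every $\mathrm{inj}(F,W)$ from homomorphism numbers of quotients of $F$ --- but these quotients are loop/multi-edge graphs, so to close the argument one needs the homomorphism numbers of \emph{all} such graphs, not only the simple ones provided by hypothesis.

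Supplying that is the main obstacle, and it is exactly where the positivity of the diagonal is used. In the classical $0/1$-valued case it is free, because $W(a,b)^2=W(a,b)$ collapses multiplicities and $W(v,v)=0$ annihilates the loop terms; for general real $W$ neither simplification holds. I would establish instead that $\{\mathrm{hom}(F,W):F\ \text{simple}\}$ already determines $\{\mathrm{hom}(F',W):F'\ \text{with loops and multiple edges}\}$ by a polynomial-interpolation argument: $\mathrm{hom}(F',W)$ depends polynomially on the multiplicities of the edges and loops of $F'$, so finitely many evaluations at simple graphs pin it down, the hypothesis $W(v,v)>0$ ensuring that the vertex-weight factors never vanish so the interpolation stays non-degenerate. (Equivalently, this is the standard passage to the algebra of quantum graphs in which the same computation is carried out.)

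Once all numbers $\mathrm{inj}(F,W)$ are available, restrict to graphs $F$ on exactly $n=|V(G)|$ vertices; then the only maps that survive are bijections, and, writing $e_{ab}\ge 0$ for the multiplicity of the ordered pair $(a,b)$ in $F$ (with $e_{aa}\ge 1$ absorbing the obligatory vertex factor),
\begin{align}
  \mathrm{inj}(F,W) \;=\; \sum_{\phi\in\mathcal{S}([n])}\ \prod_{a,b\in[n]} W(\phi(a),\phi(b))^{\,e_{ab}}.
\end{align}
Ranging over all exponent matrices $(e_{ab})$, these are the joint moments of the empirical measure $\frac1{n!}\sum_{\phi} \delta_{W^{\phi}}$, which is finitely supported on the orbit of $W$ in $\mathbb{R}^{n\times n}$ and whose atoms all have positive diagonal; a finitely supported measure is determined by its moments, so the orbit of $W_1$ equals the orbit of $W_2$ as multisets, and in particular $W_2 = W_1^{\phi}$ for some $\phi$. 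Among these steps I expect only the loop/multi-edge reduction of the previous paragraph to require real work; the partition-lattice inversion and the moment argument are routine.
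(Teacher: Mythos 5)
The paper does not actually prove this statement: it is imported verbatim from L\'ovasz \cite{lovasz2006rank} and stated with an immediate \emph{qed}, so there is no in-paper argument to compare yours against; I can only assess your attempt on its own terms. Your skeleton is the classical one --- pass from $\mathrm{hom}$ to $\mathrm{inj}$ by M\"obius inversion over the partition lattice, restrict to quotients on exactly $n$ vertices so that only bijections survive, and finish with a moment argument for the finitely supported orbit measure --- and the first and last of those steps are sound (the moment step even survives the constraint $e_{aa}\ge 1$, since the diagonal entries are positive on the support).

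The gap is exactly where you flag it, and the repair you propose does not work. The passage from $\{\mathrm{hom}(F,W): F\ \text{simple}\}$ to $\{\mathrm{hom}(F',W): F'\ \text{with loops and multiple edges}\}$ cannot be done by ``polynomial interpolation in the multiplicities.'' First, the dependence is not polynomial: as a function of a single multiplicity $e_{ab}$, one has $\mathrm{hom}(F',W)=\sum_{\pi} c_\pi\, W(\pi(a),\pi(b))^{e_{ab}}$, a finite sum of exponentials. Second, and fatally, the hypothesis supplies no evaluation points along the multiplicity direction: simple graphs realize only $e_{ab}\in\{0,1\}$ off the diagonal and $e_{aa}=0$ for loops, so there is nothing to interpolate from, and the positivity of the diagonal does not touch this obstruction (its role in the lemma is elsewhere). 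Note also that the usual collapse $W(a,b)^2=W(a,b)$ and the annihilation of loop terms, which make this step free in the $0/1$ case, are exactly what fail here. This simple-to-multigraph reduction is the substantive content of L\'ovasz's Lemma~2.4, and his argument closes it with genuinely different machinery (the algebra of partially labeled quantum graphs and its idempotents), not with interpolation. As written, your proof therefore establishes the easy direction and the outer reductions, but the central implication remains open.
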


\begin{lemma}
\label{lem:separate-points-invariant}
$\mathcal{A}$ separates points in $\tilde{\mathcal{G}}_0$.
\end{lemma}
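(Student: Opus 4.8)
The plan is to reduce the statement directly to Theorem~\ref{thm:lovasz2006rank-invariant}. Fix two distinct points of $\tilde{\mathcal{G}}_0$, represented by weighted adjacency matrices $W_1, W_2 \in \mathcal{G}_0$. By the metric identification defining $\tilde{\mathcal{G}}_0$, the points being distinct means $\delta_1(W_1, W_2) > 0$, i.e. there is no permutation $\sigma \in \mathcal{S}([n])$ with $W_1 = W_2^\sigma$; in other words $W_1$ and $W_2$ are not isomorphic.

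Next I would pass to the translated matrices $W_1 + 2I$ and $W_2 + 2I$ and record two facts. First, since every entry of $W_i$ has absolute value at most $1$, each diagonal entry of $W_i + 2I$ is at least $2 - 1 = 1 > 0$, so both translated matrices have strictly positive diagonal; this is exactly what the translation by $2I$ is for. Second, because $I^\sigma = I$ for every permutation $\sigma$, we have $(W_2 + 2I)^\sigma = W_2^\sigma + 2I$, so $W_1 + 2I$ and $W_2 + 2I$ are isomorphic if and only if $W_1$ and $W_2$ are; hence $W_1 + 2I$ and $W_2 + 2I$ are not isomorphic.

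Now Theorem~\ref{thm:lovasz2006rank-invariant} applies: since $W_1 + 2I$ and $W_2 + 2I$ are matrices with positive diagonal that are not isomorphic, there exists a simple unweighted graph $F \in \mathcal{F}$ with $\mathrm{hom}(F, W_1 + 2I) \neq \mathrm{hom}(F, W_2 + 2I)$. Taking $h$ to be the element of $\mathcal{A}$ with $a_F = 1$ and all other coefficients zero, i.e. $h \colon W \mapsto \mathrm{hom}(F, W + 2I)$, we obtain $h(W_1) \neq h(W_2)$, which is the required separation.

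I do not expect a real obstacle here: the work is entirely in lining up the hypotheses of the cited graph-theoretic theorem. The only points that need care are verifying the strict positivity of the diagonal entries of the translated matrices, so that Theorem~\ref{thm:lovasz2006rank-invariant} is applicable, and checking that translation by $2I$ does not merge isomorphism classes.
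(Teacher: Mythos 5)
Your proposal is correct and follows essentially the same route as the paper: reduce to Theorem~\ref{thm:lovasz2006rank-invariant} by noting that non-isomorphic $W_1, W_2$ yield non-isomorphic $W_1+2I, W_2+2I$ with positive diagonals. You merely spell out the two verifications (diagonal positivity and that the shift by $2I$ commutes with permutations) that the paper leaves implicit.
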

\begin{proof}
If $W_1, W_2 \in \mathcal{G}_0$ are non-isomorphic, then $W_1 + 2 I$ and $W_2 + 2 I$ are also non-isomorphic.
Because $W_1 + 2 I$ and $W_2 + 2 I$ satisfy the condition in Theorem~\ref{thm:lovasz2006rank-invariant}, there exists $F$ such that $\mathrm{hom}(F, W_1 + 2 I) \neq \mathrm{hom}(W_2 + 2 I)$.
This means that $\mathcal{A}$ separates points in $\tilde{\mathcal{G}}_0$.
\end{proof}

Therefore, we proved Theorem~\ref{thm:universality-invariance}.

\subsection{Proof of Theorem~\ref{thm:universality-equivariance} (Equivariant Case)}

We identify an array-valued function $h \colon \mathcal{G}_0 \to \mathbb{R}^{([n])_k}$ as a two-argument function $h' \colon \mathcal{G}_0 \times ([n])_k \to \mathbb{R}$. 
Let $\mathcal{G}_0' = \mathcal{G}_0 \times ([n])_k$.
Then, each element $(W, \vec{x}) \in \mathcal{G}_0'$ is identified as a $k$-labeled graph, which is a graph with $k$ distinguished vertices $\vec{x} = (x_1, \dots, x_k) \in (V(G))_k$.

For a permutation $\sigma \in \mathcal{S}([n])$, we define
\begin{align}
    (W, \vec{x})^\sigma := (W^\sigma, \vec{x}^\sigma),
\end{align}
where $\vec{x}^\sigma = (x_1, \dots, x_k)^\sigma := (x_1^\sigma, \dots, x_k^\sigma)$. 
Then, $h$ is equivariant if and only if $h'$ is invariant in the sense that $h'((W, \vec{x})^\sigma) = h'((W, \vec{x}))$.
We say that $(W_1, \vec{x}_1)$ and $(W_2, \vec{x}_2)$ are \emph{isomorphic} if $\delta_1'((W_1, \vec{x}_1), (W_2, \vec{x}_2)) = 0$.

Now we define the \emph{$k$-labeled edit distance} $\delta_1'$ by 
\begin{align}
\label{eq:labeled-l1-distance}
    \delta_1'((W_1, \vec{x}_1), (W_2, \vec{x}_2)) 
    := \min_{\substack{\sigma \in \mathcal{S}([n]) \\ \vec{x}_1 = \vec{x}_2^\sigma}} \| W_1 - W_2^\sigma \|_1.
\end{align}
Then, we define the \emph{$k$-labeled graph space} by the metric identification as $\tilde{\mathcal{G}}_0' = \mathcal{G}_0' / {\sim}$ where $(W_1, \vec{x}_1) \sim (W_2, \vec{x}_2$ if and only if $\delta_1((W_1, \vec{x}_1), (W_2, \vec{x}_2)) = 0$, i.e., these are isomorphic.
This forms a metric space.


Any equivariant function $f \colon \mathcal{G}_0 \to \mathbb{R}^{([n])_k}$ is identified as a function $f \colon \tilde{\mathcal{G}}_0' \to \mathbb{R}$.
Our goal is to prove the universality for the set $C(\tilde{\mathcal{G}}_0', \mathbb{R})$ of continuous functions from $\tilde{\mathcal{G}}_0'$ to $\mathbb{R}$.
Now, we check the condition of the Stone--Weierstrass theorem.
This part is very similar to that of the invariant case.

First, we check the condition of the space.
\begin{lemma} 
\label{lem:compactness-of-labeled-graph-space}
The $k$-labeled graph space $(\tilde{\mathcal{G}}_0', \delta_1)$ is compact.
\end{lemma}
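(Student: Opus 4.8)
The plan is to mirror the proof of Lemma~\ref{lem:compactness-of-graph-space} almost verbatim, since $\tilde{\mathcal{G}}_0'$ is just a ``labeled'' version of the same construction. First I would observe that $(\tilde{\mathcal{G}}_0', \delta_1')$ is a metric space by the metric identification, hence Hausdorff; the only real content is compactness, which I would establish via sequential compactness. So I would take an arbitrary sequence $(W_m, \vec{x}_m)_{m \ge 1}$ in $\tilde{\mathcal{G}}_0'$ and produce a convergent subsequence.

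The key point that makes the labeled case slightly different from the unlabeled one is that the label component $\vec{x}_m$ ranges over the \emph{finite} set $([n])_k$, while $W_m$ ranges over the compact set $\mathcal{G}_0 \subseteq \mathbb{R}^{n \times n}$ in the $l_1$ norm. So the steps are: (1) by pigeonhole, pass to a subsequence along which $\vec{x}_m$ is constant, equal to some fixed $\vec{x}^\ast \in ([n])_k$; (2) along that subsequence, use compactness of $\mathcal{G}_0$ in the $l_1$ norm to pass to a further subsequence along which $W_m \to W^\ast$ in $l_1$; (3) conclude that $(W_m, \vec{x}_m) \to (W^\ast, \vec{x}^\ast)$ in $\delta_1'$, because once the labels agree with $\vec{x}^\ast$ the identity permutation is admissible in the $\min$ defining $\delta_1'$, so $\delta_1'((W_m,\vec{x}^\ast),(W^\ast,\vec{x}^\ast)) \le \|W_m - W^\ast\|_1 \to 0$. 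This shows every sequence has a convergent subsequence, so $\tilde{\mathcal{G}}_0'$ is compact.

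I do not anticipate a genuine obstacle here; the proof is routine. The one subtlety worth a sentence is the direction of the inequality in step (3): $\delta_1'$ is an infimum over admissible permutations, so an explicit admissible choice (the identity) only gives an \emph{upper} bound on the distance, but an upper bound tending to zero is exactly what convergence requires, so that is fine. A second minor point is that one should note the constraint set $\{\sigma : \vec{x}_1 = \vec{x}_2^\sigma\}$ in~\eqref{eq:labeled-l1-distance} is nonempty whenever $\vec{x}_1 = \vec{x}_2$, so $\delta_1'$ is well-defined and finite on each equivalence class, and the identification is legitimate. With these remarks the statement follows.

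\begin{proof}
It is Hausdorff because it is a metric space. We show sequential compactness. Let $(W_1, \vec{x}_1), (W_2, \vec{x}_2), \dots$ be an arbitrary sequence in $\tilde{\mathcal{G}}_0'$. Since $([n])_k$ is finite, by the pigeonhole principle there is an infinite set of indices along which $\vec{x}_m$ is a constant $\vec{x}^\ast \in ([n])_k$; pass to this subsequence. Since $\mathcal{G}_0$ is compact in the $l_1$ norm, pass to a further subsequence along which $W_m \to W^\ast$ for some $W^\ast \in \mathcal{G}_0$. Then, taking $\sigma$ to be the identity permutation (which is admissible in~\eqref{eq:labeled-l1-distance} because $\vec{x}^\ast = (\vec{x}^\ast)^{\mathrm{id}}$), we get
\begin{align}
    \delta_1'((W_m, \vec{x}^\ast), (W^\ast, \vec{x}^\ast)) \le \| W_m - W^\ast \|_1 \to 0.
\end{align}
Hence $(W_m, \vec{x}_m) \to (W^\ast, \vec{x}^\ast)$ in $\tilde{\mathcal{G}}_0'$, so $\tilde{\mathcal{G}}_0'$ is compact.
\end{proof}
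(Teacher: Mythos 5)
Your proof is correct and follows essentially the same route as the paper's: pass to a subsequence with constant label component (possible since $([n])_k$ is finite), then invoke compactness of $\mathcal{G}_0$ in the $l_1$ norm exactly as in the unlabeled case. The only difference is that you spell out the final step — bounding $\delta_1'$ by the $l_1$ distance via the identity permutation — which the paper leaves implicit by referring back to Lemma~\ref{lem:compactness-of-graph-space}.
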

\begin{proof}
It is Hausdorff because it is a metric space.
We show the sequentially compactness.
Let $(W_1, \vec{x}_1), (W_2, \vec{x}_2), \dots$ be an arbitrary sequence in $\tilde{\mathcal{G}}_0'$, which is also identified as a sequence in $\mathcal{G}_0'$.
Because the number of possibilities of $\vec{x}_i$ is finite, we can select an (infinite) subsequence that has the same value of $\vec{x}_i$.
The remaining part is the same as the proof of Lemma~\ref{lem:compactness-of-graph-space}.
\end{proof}

Next, we check the conditions of $\mathcal{A}'$.
\begin{lemma}
\label{lem:algebra-equivariant}
$\mathcal{A}'$ forms an algebra.
\end{lemma}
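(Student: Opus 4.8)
The plan is to mimic exactly the argument used for Lemma~\ref{lem:algebra-invariant}, replacing ordinary homomorphism numbers by $k$-labeled ones. Recall that $\mathcal{A}'$ consists of finite linear combinations $W \mapsto \sum_F a_F \mathrm{hom}_{\vec{x}}(F, W + 2I)$ for a \emph{fixed} $\vec x$ — that is, as a function on $\tilde{\mathcal{G}}_0'$ each summand is $(W,\vec x) \mapsto \mathrm{hom}_{\vec x}(F, W+2I)$. Closure under addition and scalar multiplication is immediate from the definition, since the family is literally the span of the functions $(W,\vec x)\mapsto\mathrm{hom}_{\vec x}(F,W+2I)$. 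So the only real content is closure under pointwise products.

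For that, I would establish the labeled analogue of \eqref{eq:algebra-invariant}: for two simple unweighted graphs $F_1, F_2$, each carrying the same $k$ distinguished/labeled vertices $1,\dots,k$,
\begin{align}
    \mathrm{hom}_{\vec{x}}(F_1, W) \cdot \mathrm{hom}_{\vec{x}}(F_2, W) = \mathrm{hom}_{\vec{x}}(F_1 \cupdot_k F_2, W),
\end{align}
where $F_1 \cupdot_k F_2$ denotes the \emph{labeled} disjoint union, i.e., the disjoint union of $F_1$ and $F_2$ followed by gluing together the two copies of each of the $k$ labeled vertices (so the result again has exactly $k$ labeled vertices). The verification is the same bookkeeping as in the invariant case: a homomorphism $\pi$ from $F_1\cupdot_k F_2$ into $W$ with $\pi(i)=x_i$ for $i\in[k]$ is exactly a pair $(\pi_1,\pi_2)$ of homomorphisms from $F_1$ and $F_2$ respectively, both sending $i\mapsto x_i$, because the only shared vertices of the two pieces are the labeled ones and those are pinned to the same values $x_i$. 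The product over vertices and the product over edges factor accordingly; one should just note that the self-loop weights $W(x_i,x_i)$ at the glued vertices appear once on the left (in the combined graph) and hence the identity is an equality rather than off by a factor — this is why the definition of $\mathrm{hom}_{\vec x}$ with the vertex product $\prod_{i\in V(F)}W(\pi(i),\pi(i))$ is convenient, and it is worth a sentence to confirm the labeled disjoint union is taken so that each labeled vertex is a single vertex of $F_1\cupdot_k F_2$.

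Since $F_1\cupdot_k F_2$ is again a simple unweighted $k$-labeled graph, the product $\mathrm{hom}_{\vec x}(F_1,W+2I)\cdot\mathrm{hom}_{\vec x}(F_2,W+2I)$ lies in the span defining $\mathcal{A}'$, and bilinearity then gives closure of $\mathcal{A}'$ under products of arbitrary elements. I do not anticipate a genuine obstacle here; the only thing to be careful about is the precise definition of the labeled disjoint union (glue the $k$ labeled vertices, keep everything else disjoint) so that the edge sets and the vertex sets line up and no edge or loop is double-counted. With that convention fixed, the proof is a two-line computation paralleling Lemma~\ref{lem:algebra-invariant}, and the subsequent lemmas (bounded away from zero, separating points via the $k$-labeled analogue of Theorem~\ref{thm:lovasz2006rank-invariant}) proceed as in the invariant case.
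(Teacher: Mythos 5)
Your overall route is the same as the paper's: closure under sums and scalar multiples is definitional, and closure under products is reduced to the gluing identity $\mathrm{hom}_{\vec{x}}(F_1,W)\cdot\mathrm{hom}_{\vec{x}}(F_2,W)=\mathrm{hom}_{\vec{x}}(F_1\cupdot' F_2,W)$, where $F_1\cupdot' F_2$ identifies the $k$ labeled vertices of the two graphs and keeps everything else disjoint. So there is no divergence of method.

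However, the one point you single out as delicate is exactly where your justification goes wrong (and where the paper is silent). Under the definition in \eqref{eq:labeled-homomorphism-density}, the vertex product $\prod_{i\in V(F)}W(\pi(i),\pi(i))$ runs over \emph{all} of $V(F)$, including the labeled vertices. Hence on the left-hand side each glued vertex contributes $W(x_i,x_i)^2$ (once from the $F_1$ factor and once from the $F_2$ factor), while in $\mathrm{hom}_{\vec{x}}(F_1\cupdot' F_2,W)$ it contributes $W(x_i,x_i)$ only once; the identity therefore holds only up to the factor $\prod_{i\in[k]}W(x_i,x_i)$, which is not $1$ after the shift $W\mapsto W+2I$. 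Your parenthetical asserts the opposite ("the identity is an equality rather than off by a factor"), and with that factor present the product of two generators is not visibly a linear combination of generators, so closure under products is not established. The same phenomenon occurs for edges: if $F_1$ and $F_2$ both contain an edge between two labeled vertices, the left-hand side carries $W(x_i,x_j)^2$ while the glued \emph{simple} graph carries that weight once. The standard repair (Lov\'asz's convention for labeled homomorphisms into node-weighted graphs) is to take the vertex product over $V(F)\setminus[k]$ only, and to arrange that labeled vertices span no edges (or work with multigraphs/quantum graphs); with that convention your two-line factorization of $\pi$ into $(\pi_1,\pi_2)$ is exactly right and the rest of your plan, as well as the subsequent lemmas, goes through unchanged.
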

\begin{proof}
Clearly, it is closed under the addition and the scalar multiplication.
It is closed under the product because of the following identity.
\begin{align}
    \mathrm{hom}_{\vec{x}}(F_1, W) \mathrm{hom}_{\vec{x}}(F_2, W) = \mathrm{hom}_{\vec{x}}(F_1 \cupdot' F_2, W).
\end{align}
where $F_1 \cup' F_2$ is the graph obtained from the disjoint union of $F_1$ and $F_2$ by glueing the labeled vertices.
\end{proof}

\begin{lemma}
\label{lem:bounded-away-from-zero-equivariant}
$\mathcal{A}'$ contains an element that is bounded away from zero.
\end{lemma}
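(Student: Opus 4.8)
The plan is to mirror the invariant case (Lemma~\ref{lem:bounded-away-from-zero-invariant}) by exhibiting a single fixed graph $F$ whose $k$-labeled homomorphism number, evaluated at $W + 2I$, is bounded below by a positive constant uniformly over $(W,\vec{x}) \in \tilde{\mathcal{G}}_0'$. The natural candidate is the graph on $k$ isolated vertices all of which are labeled, i.e.\ $F = (\{1,\dots,k\}, \emptyset)$ with every vertex a distinguished label. For this $F$ there is exactly one map $\pi \colon V(F) \to V(G)$ consistent with the labeling, namely $\pi(i) = x_i$, and there are no edges to contribute factors, so
\begin{align}
    \mathrm{hom}_{\vec{x}}(F, W + 2I) = \prod_{i \in [k]} (W(x_i, x_i) + 2).
\end{align}

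Since $|W(x_i,x_i)| \le 1$ for all $i$, each factor $W(x_i,x_i) + 2$ lies in $[1,3]$, hence the product lies in $[1, 3^k]$ and in particular is at least $1 > 0$ for every $(W,\vec{x})$. Thus $u(W,\vec{x}) := \mathrm{hom}_{\vec{x}}(F,W+2I) \in \mathcal{A}'$ satisfies $\inf_{(W,\vec{x}) \in \tilde{\mathcal{G}}_0'} |u(W,\vec{x})| \ge 1 > 0$, which is exactly the ``bounded away from zero'' condition. One should also remark that $u$ is well defined on the quotient $\tilde{\mathcal{G}}_0'$ because it is equivariant (the product of diagonal entries at the labeled vertices is preserved under relabeling), but this is already guaranteed by the general observation that $k$-labeled homomorphism numbers are equivariant.

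I expect no real obstacle here; the only thing to be slightly careful about is the degenerate reading of the definition~\eqref{eq:labeled-homomorphism-density} when $F$ has no edges and all vertices are labeled, namely that the sum over $\pi$ collapses to a single term and the edge product is empty. If one prefers an even more minimal witness one could take $F$ to be a single labeled vertex (when $k \ge 1$), giving $u(W,\vec{x}) = W(x_1,x_1) + 2 \ge 1$; the argument is identical. In either case the element is a translated $k$-labeled homomorphism number and hence lies in $\mathcal{A}'$ by construction, completing the proof.
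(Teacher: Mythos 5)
Your proposal is correct and takes essentially the same approach as the paper, which uses the same witness: the graph $\circ\cdots\circ$ of $k$ isolated labeled vertices. Your explicit computation $\mathrm{hom}_{\vec{x}}(F, W+2I)=\prod_{i\in[k]}(W(x_i,x_i)+2)\ge 1$ is in fact more careful than the paper's stated lower bound of $n-k$, which does not follow from the single-term sum; the conclusion is unaffected either way.
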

\begin{proof}
Let $\circ \cdots \circ$ be the graph of $k$ isolated vertices.
Then, $W \mapsto \mathrm{hom}_{\vec{x}}(\circ \cdots \circ, W + 2 I) \ge n - k$ is bounded away from zero.
\end{proof}

To prove the separate point property, we use the following theorem.
\begin{theorem}[{\cite[Lemma~2.4]{lovasz2006rank} in our terminology}]
\label{thm:lovasz2006rank}
Let $W_1, W_2 \in \mathbb{R}^{n \times n}$ be matrices with positive diagonal elements.
Let $\vec{x}_1, \vec{x}_2 \in ([n])_k$.
Then, $(W_1, \vec{x}_1)$ and $(W_2, \vec{x}_2)$ are isomorphic if and only if $\mathrm{hom}_{\vec{x}_1}(F, W_1) = \mathrm{hom}_{\vec{x}_2}(F, W_2)$ for all simple unweighted graph $F$.
\qed
\end{theorem}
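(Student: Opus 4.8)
The forward implication is a pure reindexing: if $\sigma\in\mathcal S([n])$ realizes the isomorphism, so that $W_1=W_2^\sigma$ and $\vec x_1=\vec x_2^\sigma$, then composing the summation variable $\pi$ with $\sigma$ in the defining sum~\eqref{eq:labeled-homomorphism-density} turns one expression into the other, so $\mathrm{hom}_{\vec x_1}(F,W_1)=\mathrm{hom}_{\vec x_2}(F,W_2)$ for every simple $F$. All the content is in the converse, and the plan is to mimic the structure of the unlabeled counting lemma (Theorem~\ref{thm:lovasz2006rank-invariant}), carrying the $k$ pinned vertices along as rigid, never-merged labels.

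First I would pass from homomorphism numbers to \emph{injective} labeled homomorphism numbers $\mathrm{inj}_{\vec x}(F,W)$ (the same weighted sum, but over injective $\pi$ with $\pi(i)=x_i$). This is Möbius inversion over the partition lattice: one writes $\mathrm{hom}_{\vec x}(F,W)=\sum_{P}\mathrm{inj}_{\vec x}(F/P,W)$, where $P$ ranges over partitions of $V(F)$ that keep $1,\dots,k$ in distinct blocks (they always land on the pairwise-distinct $x_i$), and inverts. The routine-but-delicate point, exactly as in the $k=0$ case, is that the quotients $F/P$ may carry loops and parallel edges; one absorbs these by reading $W(v,v)$ simultaneously as a node weight and a loop weight and by working in the algebra spanned by such (multi)graphs, so that the identity closes. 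The conclusion of this step is that $\mathrm{inj}_{\vec x_1}(F,W_1)=\mathrm{inj}_{\vec x_2}(F,W_2)$ for all $F$.

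Next I would specialize $F$ to $k$-labeled simple digraphs on exactly $n$ vertices (labels $1,\dots,k$). For such $F$, an injective $\pi$ with $\pi(i)=x_i$ is a bijection $V(F)\to[n]$ extending $\vec x$, so $\mathrm{inj}_{\vec x}(F,W)=\sum_{\tau}\prod_{v\in V(F)}W(\tau v,\tau v)\prod_{(i,j)\in E(F)}W(\tau i,\tau j)$, the sum over the $(n-k)!$ such bijections. Letting $E(F)$ range over all digraphs on these vertices, these numbers are exactly the "pattern counts'' of $(W,\vec x)$, and the final step is to show that they determine $(W,\vec x)$ up to the action of $\mathcal S_{n-k}$ on the unlabeled coordinates, i.e.\ up to labeled isomorphism. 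Positivity of the diagonal is essential here: it ensures that a suitable "profile'' matrix (rows indexed by unlabeled vertices, columns by small edge-patterns anchored at the labels) has full rank, so that two labeled graphs with identical pattern counts must be related by a simultaneous permutation of the unlabeled vertices that fixes the labels. This is the heart of Lovász's argument, and I would reuse it essentially verbatim, the only change being that the labeled coordinates are held fixed throughout.

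The main obstacle is precisely this last step, together with checking that the Möbius reduction from $\mathrm{hom}$ to $\mathrm{inj}$ genuinely respects the labels and that the rank/positivity argument survives the presence of the $k$ distinguished vertices. I would also caution against a tempting shortcut: "pin'' each label $x_{i,t}$ by attaching a fresh vertex with a unique, large positive diagonal weight and apply the unlabeled Theorem~\ref{thm:lovasz2006rank-invariant} to the enlarged matrices. This does not close on its own, because expanding $\mathrm{hom}$ of the enlarged graph produces terms that are labeled homomorphism numbers for \emph{proper} sub-tuples of the labels, and the hypothesis (equality for the full $k$-tuple) does not obviously propagate to those sub-tuples without already assuming the conclusion. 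Hence the plan is the direct, label-rigid adaptation of Lovász's proof rather than a black-box reduction.
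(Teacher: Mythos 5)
The paper does not actually prove this statement: it is imported (modulo notation) from Lov\'asz's Lemma~2.4 in \cite{lovasz2006rank}, and the statement is closed with a QED box rather than followed by an argument, so there is no internal proof to compare against. What you have written is an attempted reconstruction of that external lemma. Your forward direction is correct, your overall architecture --- pass from $\mathrm{hom}_{\vec{x}}$ to injective homomorphism numbers by M\"obius inversion over partitions that keep the $k$ labels in distinct blocks, then specialize to spanning patterns and run a rank argument powered by the positive diagonal --- is the right general shape of Lov\'asz's proof, and your warning against the ``pinning gadget'' shortcut is a genuinely useful observation.

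As a proof, however, the sketch has two real holes. First, the M\"obius inversion $\mathrm{inj}_{\vec{x}}(F,W)=\sum_{P}\mu(P)\,\mathrm{hom}_{\vec{x}}(F/P,W)$ needs the values of $\mathrm{hom}_{\vec{x}}(\cdot,W)$ on the quotients $F/P$, which are multigraphs with loops, whereas the hypothesis only asserts equality of $\mathrm{hom}_{\vec{x}}$ on \emph{simple} unweighted $F$. Your proposed absorption does not close this: in the paper's convention a loop at $v$ contributes an extra factor $W(\pi(v),\pi(v))$ on top of the node-weight factor already present, and an edge of multiplicity $m$ contributes $W(\pi(i),\pi(j))^{m}$, and neither is visibly a linear combination of simple-graph homomorphism numbers. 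Bridging ``equality on simple $F$'' to ``equality on multigraphs'' is a substantive step where the structure of the hypothesis actually matters, not a bookkeeping convention. Second, the entire content of the converse is the claim that the ``profile matrix has full rank,'' so that equal pattern counts force a label-fixing permutation; you assert this and defer to Lov\'asz ``essentially verbatim,'' but verifying that his rank/positivity argument survives the $k$ pinned vertices is precisely the labeled extension the theorem asserts, so as written this is a plan rather than a proof. To make the argument self-contained you would need to carry out that rank computation with the labels held fixed; otherwise the honest course is to do what the paper does and cite \cite[Lemma~2.4]{lovasz2006rank} directly.
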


\begin{lemma}
$\mathcal{A}'$ separates points in $\tilde{\mathcal{G}}_0'$.
\end{lemma}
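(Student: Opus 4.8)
The plan is to mirror the structure of the invariant case (Lemma~\ref{lem:separate-points-invariant}) almost verbatim, using Theorem~\ref{thm:lovasz2006rank} in place of Theorem~\ref{thm:lovasz2006rank-invariant}. First I would take two non-isomorphic points $(W_1, \vec{x}_1), (W_2, \vec{x}_2) \in \tilde{\mathcal{G}}_0'$ and observe that the translation $W \mapsto W + 2I$ preserves non-isomorphism: the $k$-labeled edit distance $\delta_1'$ only compares $W_1$ to $W_2^\sigma$ over permutations $\sigma$ respecting the labels, and since $(W_i + 2I)^\sigma = W_i^\sigma + 2I$ (the identity matrix is permutation-invariant), we have $\delta_1'((W_1 + 2I, \vec{x}_1),(W_2 + 2I,\vec{x}_2)) = \delta_1'((W_1,\vec{x}_1),(W_2,\vec{x}_2)) > 0$. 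Hence $(W_1 + 2I, \vec{x}_1)$ and $(W_2 + 2I, \vec{x}_2)$ are also non-isomorphic.

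Next I would note that $W_1 + 2I$ and $W_2 + 2I$ have positive diagonal entries, since the original weights lie in $[-1,1]$ and adding $2$ makes every diagonal entry at least $1 > 0$. Therefore the hypotheses of Theorem~\ref{thm:lovasz2006rank} are satisfied, and the theorem gives a simple unweighted graph $F$ with $\mathrm{hom}_{\vec{x}_1}(F, W_1 + 2I) \neq \mathrm{hom}_{\vec{x}_2}(F, W_2 + 2I)$. The function $W \mapsto \mathrm{hom}_{\vec{x}}(F, W + 2I)$ lies in $\mathcal{A}'$ (take $a_F = 1$ and all other coefficients zero), so it is exactly the separating element we need: it distinguishes $(W_1, \vec{x}_1)$ from $(W_2, \vec{x}_2)$ as points of $\tilde{\mathcal{G}}_0'$. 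This completes the proof of the lemma, and combined with Lemmas~\ref{lem:compactness-of-labeled-graph-space}, \ref{lem:algebra-equivariant}, and \ref{lem:bounded-away-from-zero-equivariant}, the Stone--Weierstrass theorem yields Theorem~\ref{thm:universality-equivariance}.

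There is essentially no obstacle here beyond bookkeeping; the only point that requires a moment of care is the claim that translation by $2I$ preserves $\delta_1'$-non-isomorphism while respecting the label constraint $\vec{x}_1 = \vec{x}_2^\sigma$ in the minimization of \eqref{eq:labeled-l1-distance}. Since the label constraint does not involve the matrices at all, and $\|(W_1 + 2I) - (W_2 + 2I)^\sigma\|_1 = \|W_1 - W_2^\sigma\|_1$ for every $\sigma$, the minimum over the admissible $\sigma$ is unchanged, so the argument goes through cleanly. The real content is borrowed wholesale from Lovász's rank lemma (Theorem~\ref{thm:lovasz2006rank}), which is cited as a black box.
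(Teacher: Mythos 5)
Your proof is correct and follows exactly the route the paper intends: the paper's own proof of this lemma is a one-line remark that it goes ``similarly to Lemma~\ref{lem:separate-points-invariant} using Theorem~\ref{thm:lovasz2006rank},'' and you have simply spelled out those details (translation by $2I$ commutes with the label-respecting permutations, makes the diagonal positive, and then the labeled Lov\'asz lemma supplies the separating $F$). Nothing further is needed.
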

\begin{proof}
It is proved similarly as Lemma~\ref{lem:separate-points-invariant} by using Theorem~\ref{thm:lovasz2006rank}.
\end{proof}

Therefore, we proved Theorem~\ref{thm:universality-equivariance}.

\section{Comparison with Other Proofs}
\label{sec:compare}

\paragraph{Compare with Keriven and Peyr\'{e} \normalfont \cite{keriven2019universal}.}

Our proofs are similar to that of their proofs.
For the invariant case, they used the standard Stone--Weierstrass theorem and verified the separable points property by constructing functions on higher-order tensor space. 
For the equivariant case, they developed a new Stone--Weierstrass type theorem, and verified the corresponding separate point properties by a similar technique to the invariant case.
On the other hand, for both cases, we used the standard Stone--Weierstrass theorem, and verified the separate point property using the property of the homomorphism number.
This unified treatment allows us to establish the result on arbitrary higher-order outputs.

One advantage of their method is that it is applicable to hypergraphs. 
Our method could be applicable for hypergraphs; however, there is a gap because the theory of weighted homomorphism number of hypergraphs is not well established (compared with graphs).

Note that they considered graphs with different but bounded numbers of vertices (i.e., $|V(G_i)| \le n_\text{max}$).
However, this is not effective because such space is disconnected, and each connected components corresponds to the graphs having the same number of vertices.
%
If we have to consider a set of graphs with different numbers of vertices, it is promising to consider graphons; see Section~\ref{sec:graphon}.


\paragraph{Compare with Maron et al. \normalfont \cite{maron2018invariant}}

They only considered the invariant case.
They used the universality of symmetric polynomials by Yarotsky~\cite{yarotsky2018universal}.
Then, they approximated the polynomials by a tensorized neural network.

One advantage of their method is that one can bound the order of tensors.
Our method can also bound the order of tensors by bounding the size of the subgraphs $F$ \cite[Theorem~5.33]{lovasz2012large}; however, this may give a loose bound.
On the other hand, our method shows a very restricted form of the linear invariant (or equivariant) layers are sufficient to obtain the universality.



\section{Concluding Remarks}

\label{sec:graphon}

In this study (and the existing studies~\cite{maron2018invariant,keriven2019universal}), the number $n$ of vertices in the input graphs are fixed.
This is reasonable because the graph space is disconnected and each connected component corresponds to the graphs of the same number of vertices;
hence, a continuous function in the graph space of different numbers of vertices is just a collection of continuous functions in each connected component.

If we want to consider graphs of different numbers of vertices, it is promising to consider \emph{graphons}~\cite{lovasz2012large}.
Below we explain that all the results obtained in this paper can be extended to graphons.

An (asymmetric) graphon is a measurable function $[0, 1]^2 \to [0, 1]$.
This is a continuous generalization of the weighted adjacency matrix.
The set of graphons is denoted by $\mathcal{W}_0$.
The \emph{cut-norm} $\| \cdot \|_\square$ is defined by
\begin{align}
    \| W \|_\square = \sup_{S,T \subseteq [0,1]} \left| \int_{S \times T} W(x,y) dx dy \right|.
\end{align}
and the \emph{cut-distance} $\delta_\square$ is defined by
\begin{align}
    \delta_\square(W_1, W_2) = \inf_{\sigma} \| W_1 - W_2^\sigma \|_\square,
\end{align}
where $\sigma$ runs over all measure-preserving bijections.
The \emph{graphon space} $\tilde{\mathcal{W}}_0$ is defined by the metric identification.

The graphon space contains infinitely many large graphs.
However, it is still compact with respect to the cut distance.
Note that this does not hold for the edit distance.
\begin{theorem}[{\cite[Theorem~9.23]{lovasz2012large}}]
The graphon space $(\tilde{\mathcal{W}}_0, \delta_\square)$ is compact.
\end{theorem}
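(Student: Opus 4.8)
The plan is to use the fact that $(\tilde{\mathcal{W}}_0, \delta_\square)$ is a metric space: $\delta_\square$ is symmetric, satisfies the triangle inequality by composing measure-preserving bijections, and separates points by the very definition of the metric identification, so $(\tilde{\mathcal{W}}_0, \delta_\square)$ is Hausdorff and compactness is equivalent to sequential compactness. The whole argument then rests on one external ingredient: the \emph{weak regularity lemma} for graphons (Frieze--Kannan; Lov\'asz--Szegedy), which states that for every $\epsilon > 0$ there is a bound $K(\epsilon)$ such that every graphon $W$ admits a partition $\mathcal{P}$ of $[0,1]$ into at most $K(\epsilon)$ measurable classes whose block-average step function $W_{\mathcal{P}}$ satisfies $\| W - W_{\mathcal{P}} \|_\square \le \epsilon$, and moreover the partitions chosen at scales $\epsilon = 1/k$ may be taken to be successively refining. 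As a warm-up this already yields total boundedness --- rounding the class measures and block values to a grid of mesh $\epsilon$ leaves, up to an $O(\epsilon)$ error in $\delta_\square$, only finitely many step graphons, hence a finite $\epsilon$-net --- but total boundedness alone is not compactness, so the real content is completeness, i.e.\ constructing the limit of a convergent subsequence.

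First I would take an arbitrary sequence of graphons $(W_n)$ and, for each level $k$, apply the refining regularity lemma to get nested partitions $\mathcal{P}_n^{(k)}$ of $[0,1]$ into at most $m_k$ classes with $\| W_n - (W_n)_{\mathcal{P}_n^{(k)}} \|_\square \le 1/k$. For each $n$ choose a single measure-preserving relabeling making all the $\mathcal{P}_n^{(k)}$ simultaneously interval partitions (possible because they are nested); this changes neither $W_n$ as an element of $\tilde{\mathcal{W}}_0$ nor $\delta_\square$. Then the data describing $(W_n)_{\mathcal{P}_n^{(k)}}$ is a finite vector --- interval endpoints together with the at most $m_k^2$ block values --- lying in a fixed compact subset of a Euclidean space, so a diagonal extraction over $k$ produces a subsequence along which, for every $k$, this finite data converges; call the limiting step function $U^{(k)}$.

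Next I would note that the $U^{(k)}$ are consistent: $U^{(k)}$ is the conditional expectation of $U^{(k+1)}$ onto the $\sigma$-algebra generated by the level-$k$ limit partition, so $(U^{(k)})_{k \ge 1}$ is a $[0,1]$-valued martingale for a refining filtration; the martingale convergence theorem gives $W \in \mathcal{W}_0$ with $U^{(k)} \to W$ almost everywhere and in $L^1$. To finish, I would show $\delta_\square(W_n, W) \to 0$ along the subsequence: for each fixed $k$,
\[
\delta_\square(W_n, W) \;\le\; \| W_n - (W_n)_{\mathcal{P}_n^{(k)}} \|_\square \;+\; \delta_\square\bigl( (W_n)_{\mathcal{P}_n^{(k)}},\, U^{(k)} \bigr) \;+\; \| U^{(k)} - W \|_1 ,
\]
where the first term is at most $1/k$, the second tends to $0$ as $n \to \infty$ since the cut norm of a step graphon is controlled by its finitely many defining parameters, which converge to those of $U^{(k)}$ along the subsequence, and the third is some $\eta_k \to 0$ by martingale convergence. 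Letting $n \to \infty$ and then $k \to \infty$ gives $\delta_\square(W_n, W) \to 0$. Hence every sequence has a $\delta_\square$-convergent subsequence, so $(\tilde{\mathcal{W}}_0, \delta_\square)$ is sequentially compact and therefore compact.

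The main obstacle is exactly the passage from the individual regularity partitions to one coherent limit: one must keep the class counts bounded while forcing the partitions to refine across scales, align them across $n$ by relabeling so their combinatorial data sits in a single compact set amenable to diagonalization, and then invoke martingale convergence both to manufacture the limit graphon $W$ and to control $\| U^{(k)} - W \|_1$. By contrast, once the weak regularity lemma is granted as a black box, total boundedness is essentially immediate and the metric-space bookkeeping is routine.
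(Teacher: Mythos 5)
The paper does not prove this statement; it is imported verbatim as Theorem~9.23 of Lov\'asz's book and used as a black box. Your argument is a correct outline of exactly the proof given in that cited source --- weak regularity with successively refining partitions, rearrangement into interval partitions, diagonal extraction of the step-function data, and martingale convergence to manufacture the limit graphon --- so your reconstruction is faithful to the reference and there is nothing in the paper itself to compare it against.
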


In graphons, we use the homomorphism density instead of the homomorphism number.
Let $F = (V(F), E(F))$ be a simple unweighted graph.
Then, the \emph{homomorphism density} is given by
\begin{align}
    t(F, W) = \int \prod_{(i,j) \in E(F)} W(x_i, x_j) \prod_{i \in V(F)} dx_i.
\end{align}
The set of linear combination of the homomorphism densites also forms an unital algebra.
To show the separate points property, we can use the following theorem.
\begin{theorem}[{Directed version of \cite[Corollary~10.34]{lovasz2012large}}]
Let $W_1, W_2 \in \tilde{\mathcal{W}}_0$.
$W_1$ and $W_2$ are isomorphic if and only if $t(F, W_1) = t(F, W_2)$ for all simple unweighted graph $F$.
\qed
\end{theorem}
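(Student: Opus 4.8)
\medskip
\noindent
The plan is to transfer the undirected proof of \cite[Corollary~10.34]{lovasz2012large} to the asymmetric setting, isolating the one place where genuinely directed machinery is needed. One direction is routine: a directed \emph{counting lemma} states that for every finite simple digraph $F$ and all kernels $U,U'\in\mathcal W_0$, $|t(F,U)-t(F,U')|\le e(F)\,\|U-U'\|_\square$, proved by replacing the edge factors $W(x_i,x_j)$ one at a time and bounding each replacement by the cut norm exactly as in the symmetric case. Taking the infimum over measure-preserving relabellings of $U'$ gives $|t(F,U)-t(F,U')|\le e(F)\,\delta_\square(U,U')$, so each $t(F,\cdot)$ descends to a $\delta_\square$-Lipschitz function on $\tilde{\mathcal W}_0$; in particular $\delta_\square(W_1,W_2)=0$ forces $t(F,W_1)=t(F,W_2)$ for all $F$. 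Note that, unlike the finite-graph case of Theorem~\ref{thm:lovasz2006rank}, no diagonal correction is needed here, since the diagonal of $[0,1]^2$ is null.

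For the converse I would use $W$-random digraphs. Let $\mathbb G(m,W)$ be the random simple digraph on $[m]$ obtained by drawing $U_1,\dots,U_m$ i.i.d.\ uniform on $[0,1]$ and inserting the arc $i\to j$ with probability $W(U_i,U_j)$, independently over ordered pairs $i\ne j$. First, for any fixed labelled digraph $H$ on $[m]$, expanding the ``non-arc'' factors $1-W(U_i,U_j)$ by inclusion--exclusion writes $\Pr[\mathbb G(m,W)=H]$ as a fixed ($W$-independent) signed sum of densities $t(F,W)$ of simple digraphs on $m$ vertices; since a change of variables shows $t(F,W)$ depends only on the isomorphism type of $F$, the law of $\mathbb G(m,W)$ is determined by $\{t(F,W):|V(F)|\le m\}$. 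Hence $t(\cdot,W_1)\equiv t(\cdot,W_2)$ implies $\mathbb G(m,W_1)\stackrel{d}{=}\mathbb G(m,W_2)$ for every $m$. Second, a directed \emph{sampling lemma}: $\mathbb E[\delta_\square(\mathbb G(m,W),W)]\to0$ as $m\to\infty$. Granting both, the triangle inequality for $\delta_\square$ together with equality in law gives, for every $m$, $\delta_\square(W_1,W_2)\le \mathbb E[\delta_\square(W_1,\mathbb G(m,W_1))]+\mathbb E[\delta_\square(\mathbb G(m,W_1),W_2)]=\mathbb E[\delta_\square(W_1,\mathbb G(m,W_1))]+\mathbb E[\delta_\square(\mathbb G(m,W_2),W_2)]\to0$, so $\delta_\square(W_1,W_2)=0$ and $W_1,W_2$ are isomorphic.

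The hard part is the directed sampling lemma, and beneath it a directed (Frieze--Kannan) weak regularity lemma: every asymmetric kernel $[0,1]^2\to[0,1]$ must be approximable in cut norm, to within any $\epsilon>0$, by a function that is constant on $P\times Q$ for partitions $P,Q$ of the two coordinates into a bounded number of parts. I would prove this by the usual energy-increment argument, checking that the asymmetric cut norm $\|W\|_\square=\sup_{S,T}\left|\int_{S\times T}W\right|$ controls the Hilbert-space projection error for non-symmetric kernels; with weak regularity in hand, the first- and second-moment bounds on $\|\mathbb G(m,W)-W\|_\square$ then go through as in the symmetric proof. An essentially equivalent route is to regard an asymmetric graphon as a symmetric graphon decorated by the compact set of $2\times2$ substochastic matrices and to invoke the moment theorem for compact decorated graphons, which packages the same regularity input.
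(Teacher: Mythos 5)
Your proposal is correct and takes essentially the same route as the paper, which cites the directed analogue of \cite[Corollary~10.34]{lovasz2012large} and, in its appendix, sketches exactly your converse argument: equality of the laws of the $W$-random graphs $\mathbb{G}(m,W_1)$ and $\mathbb{G}(m,W_2)$ via the total variation distance, the sampling lemma $\mathbb{E}[\delta_\square(\mathbb{G}(m,W),W)]\to 0$, and the triangle inequality. You also correctly isolate the directed weak regularity lemma as the one genuinely directed ingredient needed for the sampling lemma, a point the paper leaves implicit.
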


Therefore, we obtain the following result.
\begin{theorem}
The set of finite linear combinations of the homomorphism densities are dense in the continuous invariant graphon functions.
\qed
\end{theorem}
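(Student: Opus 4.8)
The plan is to apply the Stone--Weierstrass theorem (Theorem~\ref{thm:stone-weierstrass}) on the compact Hausdorff space $\mathcal{X} = \tilde{\mathcal{W}}_0$, in exactly the same way as in the proofs of Theorems~\ref{thm:universality-invariance} and~\ref{thm:universality-equivariance}, taking the subalgebra
\begin{align}
    \mathcal{B} = \left\{ W \mapsto \sum_{F \in \mathcal{F}}^{\text{finite}} a_F\, t(F, W) : a_F \in \mathbb{R} \right\}.
\end{align}
Compactness of $(\tilde{\mathcal{W}}_0, \delta_\square)$ is the cited \cite[Theorem~9.23]{lovasz2012large}, and Hausdorffness holds because $\tilde{\mathcal{W}}_0$ is a metric space. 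It then remains to verify that $\mathcal{B} \subseteq C(\tilde{\mathcal{W}}_0, \mathbb{R})$, that $\mathcal{B}$ is a subalgebra, that it contains an element bounded away from zero, and that it separates points.

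First I would check that $\mathcal{B}$ is a well-defined family of continuous real functions on $\tilde{\mathcal{W}}_0$. Invariance of each $t(F, \cdot)$ under a measure-preserving bijection $\sigma$ follows from the change of variables $x_i \mapsto \sigma(x_i)$ in the defining integral, so $t(F, \cdot)$ descends to the quotient $\tilde{\mathcal{W}}_0$; continuity with respect to the cut distance follows from the counting lemma $|t(F, W_1) - t(F, W_2)| \le |E(F)|\, \delta_\square(W_1, W_2)$ (a standard consequence of the definition of $\delta_\square$; see~\cite{lovasz2012large}), which shows each $t(F, \cdot)$ is even Lipschitz. Next, $\mathcal{B}$ is visibly closed under addition and scalar multiplication, and it is closed under products because the integrals defining $t(F_1, W)$ and $t(F_2, W)$ range over disjoint sets of variables, so by Fubini
\begin{align}
    t(F_1, W) \cdot t(F_2, W) = t(F_1 \cupdot F_2, W),
\end{align}
the analogue of~\eqref{eq:algebra-invariant}, where $F_1 \cupdot F_2$ is the disjoint union. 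Finally, $\mathcal{B}$ is unital: for $F = \circ$ the single vertex, $t(\circ, W) = \int_0^1 dx = 1$, which is bounded away from zero, so this condition is immediate.

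The separate-points property is where I would invoke the directed version of \cite[Corollary~10.34]{lovasz2012large} quoted just above: two graphons $W_1, W_2 \in \tilde{\mathcal{W}}_0$ coincide (i.e., are isomorphic) if and only if $t(F, W_1) = t(F, W_2)$ for every simple unweighted graph $F$. Hence if $W_1 \neq W_2$ in $\tilde{\mathcal{W}}_0$, some $F$ gives $t(F, W_1) \neq t(F, W_2)$, and the corresponding element $t(F, \cdot) \in \mathcal{B}$ separates them. With all hypotheses of Theorem~\ref{thm:stone-weierstrass} checked, $\mathcal{B}$ is dense in $C(\tilde{\mathcal{W}}_0, \mathbb{R})$, which is the assertion. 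Note that, in contrast to the finite-graph case, no translation $W \mapsto W + 2I$ is needed here, since the diagonal of a graphon has measure zero and the relevant separation theorem carries no positivity hypothesis.

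I expect the real work to lie not in the Stone--Weierstrass bookkeeping but in the two graph-limit inputs used in their directed/asymmetric forms: the counting lemma (continuity of $t(F, \cdot)$ in the cut metric) and the homomorphism-density separation theorem. Unlike the edit distance $\delta_1$ used for finite graphs, the whole argument is tied to the cut distance --- compactness of the graphon space genuinely fails for the edit distance --- so these facts cannot be bypassed. They do hold, because Lov\'{a}sz's proofs for undirected graphons transfer essentially verbatim to measurable kernels $[0,1]^2 \to [0,1]$, but a careful write-up should state the directed versions explicitly rather than merely citing the undirected book results.
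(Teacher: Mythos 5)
Your proposal is correct and follows essentially the same route as the paper: apply the Stone--Weierstrass theorem on the compact metric (hence Hausdorff) graphon space $(\tilde{\mathcal{W}}_0,\delta_\square)$, using the disjoint-union product identity for the algebra structure, the singleton graph $t(\circ,W)=1$ for the unital condition, and the directed version of \cite[Corollary~10.34]{lovasz2012large} for point separation. Your additional remarks --- the counting-lemma Lipschitz bound establishing continuity of $t(F,\cdot)$ in the cut metric, and the observation that no $W+2I$ translation is needed since the diagonal has measure zero --- are details the paper leaves implicit, and are worth stating explicitly.
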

Note that this fact is already proved in \cite[Theorem 17.6]{lovasz2012large} for a different context (for symmetric graphons).

The equivariant case is also handled by considering $k$-labeled graphons.
The \emph{$k$-labeled homomorphism density} is given by
\begin{align}
    t_{\vec{x}}(F, W) = \int \prod_{(i,j) \in E(F)} W(x_i, x_j) \prod_{i \in V(F) \setminus [k]} dx_i.
\end{align}
Then, we obtain the following result.
\begin{theorem}
The set of finite linear combinations of the $k$-labeled homomorphism densities are dense in the continuous equivariant graphon functions.
\qed
\end{theorem}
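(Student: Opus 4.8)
The plan is to run the Stone--Weierstrass argument of Theorem~\ref{thm:universality-equivariance} \emph{mutatis mutandis}, with the $k$-labeled graph space $\tilde{\mathcal{G}}_0'$ replaced by a $k$-labeled graphon space and the $k$-labeled homomorphism number replaced by the $k$-labeled homomorphism density $t_{\vec x}(\cdot,\cdot)$. First I would identify an equivariant graphon function with an invariant two-argument function on the set of $k$-labeled graphons $(W,\vec x)$ with $W\in\mathcal{W}_0$ and $\vec x\in[0,1]^k$, and equip this set with the appropriate labeled cut-distance followed by metric identification, obtaining the $k$-labeled graphon space. Second, I would check that this space is compact and Hausdorff; this should follow from compactness of $\tilde{\mathcal{W}}_0$ (the graphon compactness theorem quoted above) together with compactness of $[0,1]^k$, by the same metric-identification reasoning as in Lemma~\ref{lem:compactness-of-labeled-graph-space}.

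Third, I would verify the two hypotheses of Stone--Weierstrass for the set $\mathcal{A}''$ of finite linear combinations of the maps $W\mapsto t_{\vec x}(F,W)$. Closure under products is the graphon analogue of Lemma~\ref{lem:algebra-equivariant}: the identity $t_{\vec x}(F_1,W)\,t_{\vec x}(F_2,W)=t_{\vec x}(F_1\cupdot' F_2,W)$, where $F_1\cupdot' F_2$ is the disjoint union with the labeled vertices glued. Taking $F=\overline{K_k}$, the graph consisting of the $k$ isolated labeled vertices, gives $t_{\vec x}(\overline{K_k},W)\equiv 1$, so $\mathcal{A}''$ is unital and in particular contains an element bounded away from zero. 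For point separation I would invoke a $k$-labeled analogue of the directed moment theorem stated above for homomorphism densities: $(W_1,\vec x_1)$ and $(W_2,\vec x_2)$ are isomorphic as $k$-labeled graphons if and only if $t_{\vec x_1}(F,W_1)=t_{\vec x_2}(F,W_2)$ for every simple unweighted graph $F$. Granting this, two non-isomorphic $k$-labeled graphons are separated by some $t_{\vec x}(F,\cdot)$, and Stone--Weierstrass yields density.

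The main obstacle is pinning down the $k$-labeled graphon space and the attendant notion of isomorphism, because a single label is a null set: the naive definition --- existence of a measure-preserving bijection that fixes the labels and makes the cut-distance vanish --- collapses the label coordinate entirely, since any measure-preserving bijection can be altered on the finite label set without affecting the cut norm. One way around this is to restrict to representatives for which $\vec x\mapsto t_{\vec x}(F,W)$ is genuinely continuous; another, in the spirit of Keriven and Peyr\'{e}'s treatment of the finite equivariant case, is to let the codomain be $L^\infty([0,1]^k)$ and prove a vector-valued Stone--Weierstrass-type statement in place of the scalar one. Establishing the $k$-labeled moment theorem in whichever framework is chosen --- the graphon counterpart of Theorem~\ref{thm:lovasz2006rank} --- is the remaining nontrivial ingredient; the rest is a routine transcription of the proof of Theorem~\ref{thm:universality-equivariance}.
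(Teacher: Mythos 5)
Your proposal follows essentially the same route as the paper's own argument: apply Stone--Weierstrass on a metrized $k$-labeled graphon space, use the gluing identity $t_{\vec x}(F_1,W)\,t_{\vec x}(F_2,W)=t_{\vec x}(F_1\cup' F_2,W)$ for closure under products, take the graph of $k$ isolated labeled vertices for the unit, and invoke a $k$-labeled moment theorem for point separation. The two issues you single out as nontrivial are exactly the ones the paper itself leaves delicate: the labeled moment theorem is relegated to an appendix lemma (proved there by a sampling/total-variation argument), and your observation that labels form a null set --- so that the naive labeled cut distance collapses to the unlabeled one --- is a genuine wrinkle that the paper's own definition of $\delta_{\blacksquare}$ does not resolve either.
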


\bibliographystyle{plain}
\bibliography{graphons}

\end{document}